\documentclass{article}
\usepackage{supporting_files/arxiv}
\usepackage{silence}
\usepackage[utf8]{inputenc}
\usepackage[T1]{fontenc}
\usepackage{url}
\usepackage{microtype}
\usepackage[dvipsnames]{xcolor}
\usepackage{comment}
\usepackage[title]{appendix}

\usepackage{graphicx}
\usepackage{wrapfig}
\usepackage{subcaption}
\usepackage{float}
\usepackage[font=small,labelfont=bf]{caption}
\usepackage[normalem]{ulem}
\usepackage{placeins}
\usepackage{booktabs}
\usepackage{multicol}
\usepackage{multirow}
\usepackage{tablefootnote}
\usepackage{enumitem}
\usepackage{chngcntr}   

\usepackage{amsmath}
\usepackage{amsfonts}
\usepackage{amssymb}
\usepackage{mathtools}
\usepackage{mathrsfs}
\usepackage{nicefrac}
\usepackage{amsthm}
\usepackage{thmtools}
\usepackage{thm-restate}
\usepackage[ruled]{algorithm2e}
\usepackage{algpseudocode}
\usepackage{array, makecell}
\usepackage{centernot}

\usepackage{amsmath,amsfonts,bm}

\def\eqref#1{equation~\ref{#1}}
\def\Eqref#1{Equation~\ref{#1}}

\def\1{\bm{1}}

\DeclareMathAlphabet{\mathsfit}{\encodingdefault}{\sfdefault}{m}{sl}
\SetMathAlphabet{\mathsfit}{bold}{\encodingdefault}{\sfdefault}{bx}{n}

\DeclareMathOperator*{\argmax}{arg\,max}
\DeclareMathOperator*{\argmin}{arg\,min}

\declaretheorem{theorem}

\declaretheorem[numberwithin=theorem]{corollary}

\usepackage[sort]{natbib}
\usepackage{hyperref}
\usepackage[nameinlink,capitalize,noabbrev]{cleveref}
\definecolor{mydarkblue}{rgb}{0,0.08,0.45}
\hypersetup{colorlinks=true,
    linkcolor=mydarkblue,
    citecolor=mydarkblue,
    filecolor=mydarkblue,
    urlcolor=mydarkblue
}
\crefname{tcb@cnt@definitionbox}{Definition}{Definitions}

\usepackage[most]{tcolorbox}
\newtcolorbox[auto counter]{notebox}[2][]
{
  center,
  breakable,
  width = \linewidth,
  colframe = gray!25,
  colback  = gray!10,
  coltitle = black,  
  title    = \textbf{Note \thetcbcounter. #2},
  #1,
}
\newtcolorbox[auto counter]{definitionbox}[2][]{
  center,
  unbreakable,
  width = \linewidth,
  colframe = teal!30!white,
  colback = teal!3,
  coltitle = black,
  title = \textbf{Definition \thetcbcounter. #2},
  #1,
}

\usepackage[draft]{todonotes}   

\title{Defining and Generating Compositional Structure}
\renewcommand{\undertitle}{} 
\renewcommand{\headeright}{Preprint} 
\renewcommand{\shorttitle}{Defining and Generating Compositional Structure} 

\author{
    Eric Elmoznino$^{*,1,2}$, 
    Tejas Kasetty$^{1,2}$,
    Blaise Agüera y Arcas$^{3}$,
    Yoshua Bengio$^{1,2,4}$,
    Guillaume Lajoie$^{1,2,3}$ \\\\
    $^1$Mila -- Quebec AI Institute,
    $^2$Université de Montréal,
    $^3$Google Paradigms of Intelligence Team,
    $^4$LawZero
    \vphantom{
    \thanks{Correspondence to: \texttt{eric.elmoznino@gmail.com} \\
    Code is provided at \url{https://github.com/EricElmoznino/defining_and_generating_compositional_structure}}
    }
}

\hypersetup{
pdftitle={Defining and Generating Compositional Structure},
pdfsubject={deep-learning},
pdfauthor={Eric Elmoznino},
pdfkeywords={compositionality, complexity, structure, compression, machine learning},
}

\begin{document}
\maketitle

\begin{abstract}
    Compositional structure is pervasive in nature and across scales: atoms combine into molecules, molecules into cells, and cells into organisms, while further compositional hierarchies appear within each level. More broadly, compositionality --- the reuse and recombination of parts into novel structures --- is a fundamental source of interesting complexity in both natural and artificial systems. Yet despite its importance, and despite our apparent ease in recognizing it, mathematical accounts capturing what we intuitively mean by compositional structure remain limited. Here, we fill this gap by formalizing a simple idea: compositional structure emerges when parts, considered together, reveal shared structure that is not captured when those parts are modeled separately. We express this idea using algorithmic information theory --- the mathematics of information and compression --- which provides a natural way to identify structure in arbitrary data without specifying in advance what that structure must look like. Applied across scales, our definition informs where and how compositional structure emerges within an object. Using tractable approximations, we then invert the framework to generate compositional objects: we specify the desired form of their compositional structure without prescribing their particular content, and optimize directly over strings and images. The resulting objects exhibit readily recognizable hierarchical reuse and part–whole organization, suggesting that the definition captures compositional structure rather than merely complexity, regularity, or shared information. Finally, we discuss implications for machine learning, including a new perspective on depth in neural architectures and principles for constructing compositional data curricula capable of supporting open-ended learning.
\end{abstract}
\keywords{compositionality, complexity, structure, compression, machine learning}

\section{Introduction}

\textit{``Each portion of matter can be conceived as a garden full of plants, and as a pond full of fish. But each branch of the plant, each member of the animal, each drop of its humors, is also such a garden or such a pond.''} \\
\hspace*{\fill} --- Gottfried Wilhelm Leibniz, \citep{leibniz1898monadology}

Compositional structure is everywhere, and at every scale. Atoms combine into molecules, molecules into cells, cells into organs, and organs into organisms. Each of these levels in turn contains further rich hierarchies; cells and organs, too, are intricately organized assemblies of interacting parts. Compositionality applies to more abstract spaces as well: music develops motifs into phrases and movements; programs build modules from modules and functions from functions; concepts are built from other concepts; technologies are built from more granular technologies \citep{arthur2009nature}; scientific theories inherit and reorganize earlier ideas; and a well-designed curriculum continually recombines prior knowledge into more sophisticated skills. Across all these examples, complexity does not arise from an undifferentiated collection of elements, but from parts being reused and recombined into new structures. In this broad sense, compositionality is a signature of both life and intelligence, and arguably one of the principal sources of interesting complexity in the universe \citep{arcas2025intelligence}.

Despite how readily we recognize such examples, it remains difficult to say precisely what makes them compositional. Familiar descriptions --- that the whole is more than the sum of its parts, or that the meaning of a complex expression depends on its structure and the meanings of its constituents --- are suggestive but do not by themselves provide a general quantitative measure \citep{pagin2010compositionality}. More formal approaches in machine learning typically begin with a prescribed set of semantic constituents and a rule for reconstructing the whole from them. Some define these constituents using known generative factors or object slots \citep{wiedemer2023compositional,wiedemer2024provable}, while others restrict their combination to a particular operation, such as a linear map or a prescribed hierarchy \citep{trager2023linear,andreas2019measuring}. These approaches are useful when the parts and their modes of combination are already known, but they make compositionality relative to externally supplied semantics or a restricted form of composition \citep{elmoznino2024complexity}. They therefore do not fully answer the broader question: for an arbitrary object, what intrinsic compositional structure emerges across its parts, how much of it is present, and how is it organized across scales?

Our starting point is that compositional structure emerges when parts, considered together, reveal shared structure that is not captured when those parts are modeled separately. Here, a model represents the structure in an object: the regularities and patterns that provide a shorter explanation of the data. If the model of the whole contains something genuinely new beyond the models of the parts, then those parts share some pattern or organization that becomes apparent only through their combination. This is what we call \emph{novel compositional structure}. A cell viewed in isolation reveals its internal organization; a collection of cells viewed together may additionally reveal a tissue, with spatial arrangements and collective mechanisms absent from any one cell. The same principle can be applied recursively: tissues form organs, motifs form phrases, and functions form higher-level functions. Repeating the comparison at different locations and scales produces a profile of where an object's compositional structure emerges. \cref{sec:intuition} develops this idea and tests it against several intuitive cases.

To make this precise, we use algorithmic information theory: a mathematics of information expressed through description length and compression. The shortest two-part description of an object separates a model, which captures its compressible structure, from the residual information needed to specify the object given that model. We compare these structure-bearing models across parts and wholes, measuring how much additional information is needed to describe the model of the whole from the models of its parts. This lets us speak rigorously about parts, structure, and novelty without specifying in advance what the structure must be. The distinction is between \emph{form} and \emph{content}: our quantities describe where and how structure is composed while remaining agnostic to the particular structures or patterns involved. \cref{sec:defining} formalizes the definition, extends it to multiscale compositional structure profiles, and develops tractable approximations.

A quantitative definition also lets us look at the problem in reverse: instead of measuring the compositional structure of a given object, we can specify a desired profile and search for objects that exhibit it. We demonstrate this in \cref{sec:generating} by optimizing directly over strings and images. Without prescribing their particular content, the objective produces strings with nested patterns and images with coherent, hierarchical visual organization reminiscent of abstract art. These experiments serve both as a method for generating structured data and as a concrete test of the definition. The generated objects display recognizable reuse, hierarchy, and part--whole organization rather than mere randomness or repetition.

Finally, we consider what this perspective suggests for machine learning. Natural data often builds structure on top of structure across many scales, matching the inductive bias of deep architectures whose successive layers build increasingly abstract features from earlier ones. Compositional structure can also grow through time: new data may reuse, extend, or reorganize what came before. This suggests principles not only for choosing architectures, but for ordering experience into curricula in which each stage builds maximally on prior structure, and for gathering new data that continually expands what a learner can represent. We close by developing these connections between depth, curricula, and open-ended learning to our definition of compositional structure in \cref{sec:discussion}.

\section{Intuition for compositional structure}
\label{sec:intuition}

Before introducing the formal machinery, we build intuition through a running concrete example based on computer programs. We use computer programs because they are familiar and paradigmatic compositional objects: programmers explicitly define reusable parts and combine them to construct more complex ones, making the relevant structure unusually easy to see. A self-contained program can be separated conceptually into a \emph{library} of reusable functions, classes, and data structures, and the remaining code written using that library. When the library is designed to make the complete program as short as possible, it captures the program's reusable structure. A richly compositional program should then have a modular library containing a varied repertoire of components that are reused throughout the program and recombined to define higher-level components. A library with very few distinct components provides little material to compose. Conversely, a large collection of functions used only within separate portions of the code is better understood as several unrelated programs placed beside one another; size alone does not make a library compositional.

The same program can be organized around many different libraries, so inspecting an arbitrary implementation would make this intuition depend on choices made by its programmer. Compression provides a less arbitrary reference point: consider the library that yields the shortest overall description after accounting for both the library and the code expressed through it. Our strategy is to track how this optimal library changes as increasingly large parts of the program are considered together. If two snippets contain related logic, their joint description may justify a reusable abstraction that was not economical for either snippet alone. Compositionality then depends on whether the library of the whole contains something genuinely new: not new raw code, since the relevant code already occurred in the parts, but a new \emph{structural description} of what those parts share.

\looseness=1
More concretely, let $x_a$ and $x_b$ denote two program fragments and let $x=[x_a,x_b]$ denote their combination, with each expressed directly as code without factoring reusable pieces into a library. Write $m_{x_a}$, $m_{x_b}$, and $m_x$ for the optimal libraries that compress the corresponding fragments or whole. Suppose the same block of code occurs once in each fragment and is long enough that extracting it into a function becomes worthwhile only after the fragments are joined. The function need not appear in $m_{x_a}$ or $m_{x_b}$, since defining it for a single use would not shorten either fragment individually. For the whole, however, extracting the shared block into a function can shorten the total program. The library $m_x$ has grown because a common part becomes reusable structure only at the level of the whole. The same phenomenon can involve refactoring rather than simply adding a function. For example, one fragment may justify a matrix-multiplication routine while another justifies a vector-norm routine; considered together, the shortest library may redefine the norm in terms of the existing multiplication routine. In both cases, combining the fragments changes how their structure is optimally represented and reveals a novel interaction between them.

\looseness=1
Two contrasting cases clarify why this is more specific than either reuse or structural complexity alone. First, $x_b$ might simply provide more uses of a library that was already optimal for $x_a$. The whole is then compressed using the same library as the parts: there is shared structure and a compression benefit, but no new abstraction or refactoring is required. Second, $x_a$ and $x_b$ might be entirely unrelated programs. The optimal library of the whole may be larger than either part-library, but only because it places their independent contents side by side. Given $m_{x_a}$ and $m_{x_b}$, constructing $m_x$ requires essentially nothing novel. Both of these cases differ from the compositional one: in the first, all the structure needed to compress both parts is already encoded in one of their libraries; in the second, there is no shared structure to exploit. Compositional structure lies in a different space, where the best joint library cannot be obtained merely by selecting or merging the libraries of the parts.

\looseness=1
This comparison can be applied recursively. A program may reveal one abstraction when short fragments are joined, another when larger segments of code are combined, and still others when those segments form the complete program. Recording where the optimal library grows or is reorganized therefore describes not just whether the program is compositional, but where its compositional structure emerges across locations and scales. The comparison at each join is local: it asks what additional library structure emerges when those particular parts are combined. Structure already captured within a smaller segment is therefore attributed to the earlier joins inside that segment, while later joins capture only the further structure introduced by combining it with other segments.

\looseness=1
Programs and libraries provide a concrete example of a more general idea. For an arbitrary object, a \emph{model} plays the role of the library: it captures regularities that allow the object to be described more concisely, while the remaining description specifies details not captured by the model. The object may represent music, an image, an \textit{iid} dataset, or a nonstationary data stream. In every case, we ask the same question of a chosen division into parts: does the best model of the whole contain structural information that cannot be recovered from the best models of the parts? The next section makes ``best model'' and ``cannot be recovered'' precise using algorithmic information theory. Importantly, the claim is not that the raw whole contains information absent from its parts: by construction, it is simply their concatenation. Rather, combining the parts can change the best account of their structure. It is in this precise sense that the description of a compositional whole is more than the sum of the descriptions of its parts.

\section{Defining compositional structure}
\label{sec:defining}

The preceding intuition suggests that compositional structure can be identified by comparing the structure-bearing descriptions of parts and wholes. We now make this idea precise using algorithmic information theory. We first use optimal two-part codes \citep{grunwald2007minimum} to define the model that captures an object's compressible structure; we then measure the novel structure that emerges at a composition step, apply this comparison recursively to obtain multiscale compositional structure profiles, and describe how the resulting quantities can be approximated in practice.

\subsection{Preliminaries: algorithmic information theory}

\paragraph{Kolmogorov complexity.}

Kolmogorov complexity \citep{li2008introduction,kolmogorov1965three,fortnow2000kolmogorov} is a notion of information quantity that serves as the foundations for algorithmic information theory. Intuitively, the Kolmogorov complexity of an object $x$, denoted $K(x)$, is the length of the shortest program (in some programming language) that outputs $x$. A related notion is the conditional Kolmogorov complexity of $x$ given another object $y$, denoted $K(x \mid y)$, which is the length of the shortest program that takes $y$ as input and outputs $x$. Kolmogorov complexity has many intuitive properties as a measure of information quantity that generalize equivalent properties from Shannon information theory \citep{grunwald2003kolmogorov}, and is deeply rooted in the idea of \emph{compression}.

Formally, throughout this paper $K$ denotes \emph{prefix} Kolmogorov complexity relative to a fixed universal prefix Turing machine $U$. A prefix Turing machine is one in which valid programs are self-delimiting; they are commonly-used in algorithmic information theory for technical convenience. For a finite string $x$, $K(x)$ is the length $l(r)$ (in bits) of the \emph{shortest} binary program $r$ that prints $x$ and halts:
\begin{align}
    K(x) = \min_r \{ l(r) : U(r) = x, r \in \{0,1\}^* \} ,
\end{align}
and given a separate string $y$, the conditional Kolmogorov complexity $K(x \mid y)$ is defined analogously as:
\begin{align}
K(x \mid y) = \min_r \{ l(r) : U(r,y) = x, r \in \{0,1\}^* \} .
\end{align}
For tuples, let $\langle x,y\rangle$ denote a fixed computable self-delimiting encoding of $x$ and $y$, and use $K(x,y)$ as shorthand for $K(\langle x,y\rangle)$; conditioning on several objects is interpreted similarly. The particular choices of universal machine and tuple encoding affect these quantities by at most an additive constant independent of the objects. In general, $K(\cdot)$ is incomputable. However, the quantity can still be useful and approximated.

\begin{notebox}{Suppression of lower-order terms}
    Kolmogorov-complexity relationships generally hold only up to small additive terms. Changing the universal machine or a fixed encoding contributes an $\mathcal{O}(1)$ constant, while recording lengths, boundaries, or other auxiliary information can contribute $\mathcal{O}(\log n)$ bits for objects of length $n$. Throughout the paper, we suppress these terms and write the corresponding equalities and inequalities directly. This convention keeps the description-length arguments transparent without changing their qualitative content. Accordingly, statements involving zeros, strict inequalities, or exact ties should be understood up to these suppressed terms: the relevant differences are assumed to be large enough that coding overheads cannot change the comparison.
\end{notebox}

\paragraph{Structure and Occam's razor.}

Although Kolmogorov complexity provides a canonical notion of information quantity, it does not separate structural complexity from randomness. A regular string has low complexity because it can be generated by a short program, whereas a random string has high complexity because no substantially shorter description exists than the string itself. However, the latter form of complexity does not reflect elaborate internal organization; it reflects the absence of compressible structure. In this sense, $K(x)$ entangles two distinct contributions to description length: one arising from meaningful structure, and one arising from irreducible randomness. This motivates finer notions that aim to measure structure rather than mere incompressibility.

This distinction between structural and random information was important to Kolmogorov himself, who quantified it through a new metric \citep{kolmogorov1973tallinn} that was later generalized under various formulations and termed \emph{sophistication} or \emph{meaningful information} \citep{grunwald2008algorithmic,gacs2001algorithmic,vitanyi2006meaningful,koppel1989program}. Intuitively, measuring structural information involves compressing an object using two-part coding schemes that follow the minimum description length (MDL) principle \citep{grunwald2007minimum} and Occam's razor.

Let $\mathcal{M}$ be a model class, and denote the two-part code length of $x$ under a model $m \in \mathcal{M}$ as:
\begin{align}
    L_m(x) = K(m) + l_m(x) ,
\end{align}
where $l_m(x)$ is the model-to-data code length (i.e., the cost of describing the data given the model). The form of $l_m(x)$ depends on the model class under consideration. For instance, in the case of computable probability distributions, $l_m(x) = -\log_2 m(x)$, where $m(x)$ is the probability mass that $m$ assigns to $x$.

We define the set of minimum description length models by:
\begin{align}
    \mathcal{M}^*(x) := \argmin_{m\in\mathcal{M}} L_m(x) .
\end{align}

Assuming there is a unique model in $\mathcal{M}^*(x)$ with minimal Kolmogorov complexity, we define the \textbf{Occam’s razor model} of $x$, denoted $m_x$, by:
\begin{align}
    \label{eq:occam}
    m_x := \argmin_{m\in\mathcal{M}^*(x)} K(m) .
\end{align}
The model $m_x$ relates to what has been called the \emph{minimal algorithmic sufficient statistic} of $x$ in prior literature \citep{grunwald2008algorithmic,gacs2001algorithmic,vereshchagin2016algorithmic}. $K(m_x)$ captures the structural information in $x$ that can be exploited for compression within the model class $\mathcal{M}$, while the residual term $l_{m_x}(x)$ represents the remaining incompressible, unstructured, and random information in $x$ relative to that model class. For sufficiently expressive model classes, one has $L_{m_x}(x) = K(m_x) + l_{m_x}(x) = K(x)$, so that $m_x$ decomposes Kolmogorov complexity into structured and unstructured components.

\subsection{Formal definition}

\paragraph{Novel compositional structure.}

Using the notion of an Occam's razor model defined in \cref{eq:occam}, we provide our formal definition for novel compositional structure.

\begin{definitionbox}[label=def:compositionality]{Novel compositional structure}
    Let $x=[x_a,x_b]$ denote a decomposition of a string $x$ into two substrings $x_a$ and $x_b$, where $[x_a,x_b]$ denotes concatenation. Let $m_\cdot$ denote an Occam's razor model as defined in \cref{eq:occam}, and assume that it is uniquely defined for each of $x$, $x_a$, and $x_b$. We define the amount of \emph{novel compositional structure} in $x$ relative to this decomposition as:
    \begin{align}
        C(x_a,x_b \to x) := K(m_x \mid m_{x_a},m_{x_b}) .
    \end{align}
\end{definitionbox}

This quantity measures the additional information required to describe the model of the whole string, $m_x$, given the models of the parts, $m_{x_a}$ and $m_{x_b}$. In particular, $C(x_a,x_b \to x) \approx 0$ in cases where the structure of the whole is already recoverable from the structures of the parts. For example, this holds when $m_x$ is identical to one of $m_{x_a}$ or $m_{x_b}$, or when $m_x$ is a trivial combination of the pair $(m_{x_a},m_{x_b})$ with only minor overhead (e.g., $m_x$ compresses $x$ by using $m_{x_a}$ to compress $x_a$ and using $m_{x_b}$ to compress $x_b$).

Thus, $C(x_a,x_b \to x)$ is intended to quantify structure present in the whole that is not recoverable from the models of the parts alone. Intuitively, $C(x_a,x_b \to x)$ is large when jointly considering $x_a$ and $x_b$ reveals a novel compression strategy that does not arise when they are modeled separately, such as when a pattern repeats across the boundary between $x_a$ and $x_b$ but not within either substring individually. More broadly, it measures the extent to which the structure of the whole is \emph{different} from the structures of the parts.

Note that \cref{def:compositionality} measures only \emph{novel} compositional structure from the decomposition of $x$ into $x_a$ and $x_b$; it does not quantify the compositional structure that is already present in the parts themselves.

\paragraph{Compositional structure profiles.}

The quantity $C(x_a,x_b \to x)$ measures novel compositional structure relative to a single decomposition of a string. More generally, if $x=[x_1,\dots,x_k]$, we define:
\begin{align}
    C(x_1,\dots,x_k \to x) := K(m_x \mid m_{x_1},\dots,m_{x_k}) .
\end{align}
This definition can be applied recursively with respect to any chosen decomposition strategy, which allows us to characterize a string's \textit{compositional structure profile}.

\begin{definitionbox}[label=def:compositionality_profile]{Compositional structure profile}
    Let $T$ be a rooted decomposition tree for a string $x$, where each node $v$ is associated with a substring $x_v$ and children $v_1,\dots,v_{k_v}$ satisfying $x_v = [x_{v_1},\dots,x_{v_{k_v}}]$. For each such node and children, we define its local compositionality by:
    \begin{align}
        C_{T,x}(v) := C(x_{v_1},\dots,x_{v_{k_v}} \to x_v)
                    = K(m_{x_v}\mid m_{x_{v_1}},\dots,m_{x_{v_k}}) .
    \end{align}
    We define a string $x$'s \emph{compositional structure profile} with respect to a rooted decomposition tree $T$ as:
    \begin{align}
        \label{eq:compositionality_profile}
        \mathcal{P}_T(x)
        :=
        \left(C_{T,x}(v)\right)_{v\in \mathrm{Int}(T)}
        \in \mathbb{R}_{\ge 0}^{\mathrm{Int}(T)} ,
    \end{align}
    where $\mathrm{Int}(T)$ denotes the set of internal nodes of $T$.
\end{definitionbox}

Intuitively, \cref{def:compositionality_profile} says that we can consider a hierarchical decomposition of a string into a tree and record how much novel compositional structure appears at each composition step in that tree's hierarchy. For example, one may choose $T$ to be a balanced binary tree obtained by recursively splitting $x$ into two halves until a prescribed minimum substring length is reached. In that case, $C_{T,x}(v)$ is measured at every join operation in the recursive decomposition, and $\mathcal{P}_T(x)$ captures the emergence of novel compositional structure in $x$ at multiple scales and locations.

\begin{figure}[ht]
    \centering
    \includegraphics[width=\linewidth]{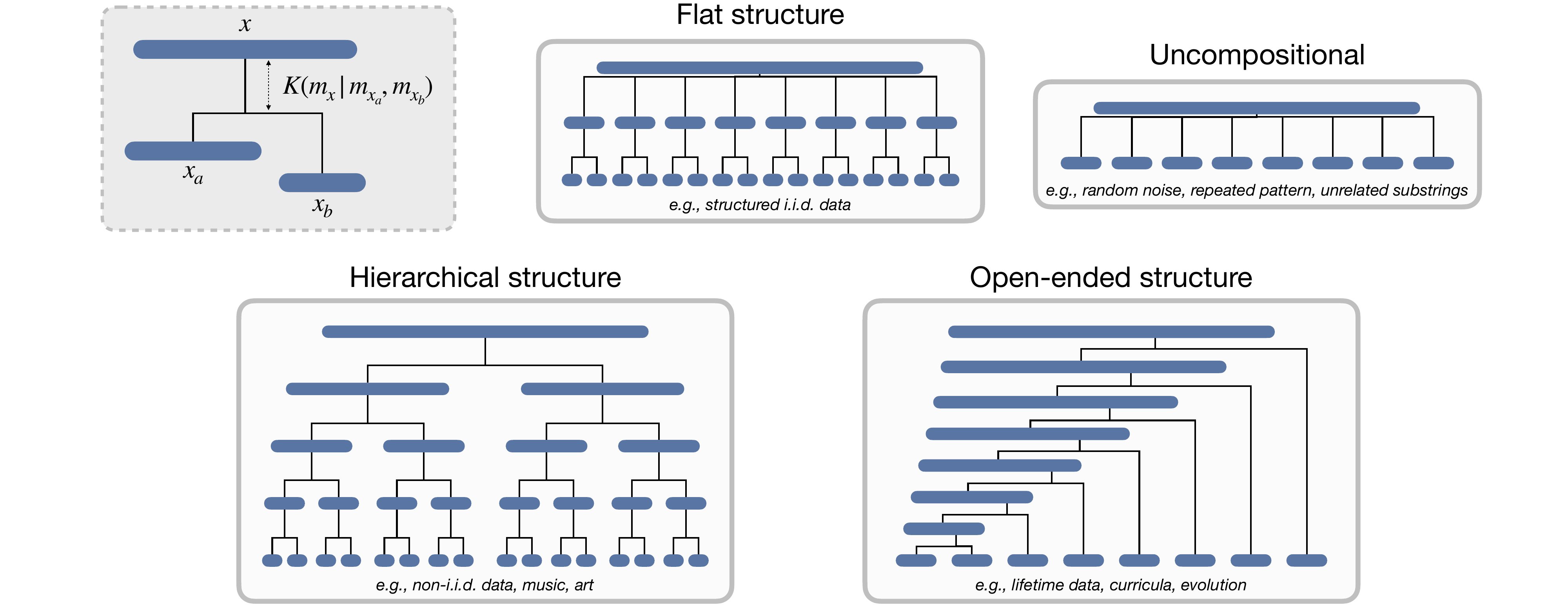}
    \caption{\textbf{Visual illustration of compositional structure profiles.} A profile records where novel structure emerges along a hierarchical decomposition of an object. At each merge, the vertical increment represents the additional information required to describe the model of the whole from the models of its parts, $C(x_a, x_b \to x) := K(m_x \mid m_{x_a},m_{x_b})$ (top left). Different distributions of these increments produce characteristic signatures: negligible novel structure for uncompositional objects, structure confined to bounded scales for structured \textit{iid} data, structure distributed across nested scales for hierarchical objects, and continually accumulating structure for open-ended processes.}
    \label{fig:compositional_structure_profiles}
\end{figure}

\cref{fig:compositional_structure_profiles} visually illustrates how compositional structure profiles provide signatures not only of how much structure an object contains, but of how that structure is organized. Random or trivially regular objects exhibit little novel structure throughout the decomposition. For structured \textit{iid} data, novel structure appears only up to a characteristic scale: once enough observations determine the shared data-generating process, additional samples introduce no new model structure since the arrangement of the samples is arbitrary and random. Hierarchical objects such as music and art instead exhibit novel structure across many nested scales, as smaller patterns compose into progressively larger ones. Under a left-to-right decomposition, open-ended processes such as curricula, lifetime experience, or evolution produce a different signature, with each new segment extending or refactoring the structure accumulated so far. These qualitatively distinct profiles make the organization and growth of an object's structure apparent.

An image dataset makes the connection between hierarchical compositional structure and scale concrete. A single image of a cat contains many edge segments, textures, and simple shapes, so these recurring structures may already justify their place in its optimal model; with only one cat, however, introducing a reusable concept of a cat may not yet repay its description length cost. Across many images, cats recur often enough for the joint model to represent them as compositions of lower-level structures; with still more data, rarer regularities can justify further distinctions such as breeds or recurring relations between cats and indoor or outdoor scenes. Larger datasets therefore expose progressively higher levels of the hierarchy, since rarer combinations involving more parts require more observations before they improve compression. The compositional structure profile records the scales at which these new abstractions become worthwhile, and for non-stationary data additionally records the locations.

\paragraph{The choice of decomposition.}

Up to this point, \cref{def:compositionality_profile} has defined a compositional structure profile in a way that depends on the choice of decomposition tree $T$. However, intuitions about compositionality strongly suggest that a given object has an \emph{intrinsic}, or ``true'' decomposition. We think of a compositional object has having parts that combine to form wholes in a particular way, and capturing those intrinsic hierarchies is an important part of defining compositional structure. We leave this as an open problem for future work, which bears on several important questions. For instance, if one were to use a \emph{different} decomposition from this intrinsic one, would the resulting compositionality structure profile still be meaningful? Do there exist objects which are intuitively non-compositional, but erroneously appear as if they were under the wrong decomposition? For an object that was originally constructed through a series of compositions, as is the case in the evolution of life \citep{arcas2025intelligence} and technology \citep{arthur2009nature}, would the intrinsic decomposition recover this true history?

Although we leave these questions unanswered in the present work, we conjecture that an object's intrinsic decomposition is the one that maximizes the sum of scores $C(x_{v_1},\dots,x_{v_{k_v}} \to x_v)$ across the tree (i.e., the sum of \cref{eq:compositionality_profile}). We justify this more extensively in \cref{sec:maximal_decomposition}, but the intuition is that this would result in joins that maximally privilege the emergence of shared parts throughout the decomposition, capturing the object's natural hierarchy.

\subsection{Approximating compositional structure}
\label{sec:approximating_compositional_structure}

The preceding definitions cannot be computed exactly. In addition to the uncomputability of the Kolmogorov complexities used to describe models and measure conditional novelty, identifying an Occam's razor model requires searching a potentially infinite class of computable models. Even if progressively better descriptions are found, there is generally no way to certify that no unexamined model would yield a shorter two-part code.

In practice, we replace these quantities with empirical coding and search procedures. Let $f$ denote an empirical two-part coding scheme, together with an associated model class and fitting procedure, that assigns the code length:
\begin{align}
    L_m^f(x)
    :=
    L_{\mathrm{model}}^f(m)
    +
    L_{\mathrm{data}}^f(x\mid m)
\end{align}
and returns an empirical MDL model $m_x^f$. Let $g$ denote an empirical conditional coding scheme for describing one fitted model given others. We then define the empirical compositionality of $x=[x_1,\dots,x_k]$ by:
\begin{align}
    C^{f,g}(x_1,\dots,x_k\to x)
    :=
    L_{\mathrm{cond}}^g
    \bigl(
        m_x^f
        \mid
        m_{x_1}^f,\dots,m_{x_k}^f
    \bigr).
\end{align}
Thus, $f$ approximates both the description lengths and the search involved in constructing the models, while $g$ measures the additional cost of describing the fitted model of the whole from those of its parts. The resulting quantity is an operational, coding-strategy-dependent notion of compositionality, rather than a formal bound on the corresponding algorithmic-information quantity. Further details are provided in \cref{sec:approximating_compositional_structure_details}.

\subsection{Relationship to other measures of compositionality}

The literature on compositionality has a long history spanning many fields, so we do not aim to exhaustively survey it here. Instead, we focus on formal definitions of compositionality that have been proposed (at times under different names), with a particular emphasis on approaches leveraging algorithmic or Shannon information theory.

\paragraph{Algorithmic mutual information.}

For an object $x = [x_a, x_b]$, algorithmic mutual information (AMI) quantifies the degree to which the whole is more compressible than the parts:
\begin{align*}
    I(x_a : x_b) = K(x_a) + K(x_b) - K(x_a, x_b) .
\end{align*}
AMI has been proposed as a measure for what we here call novel compositional structure \citep{arcas2025intelligence,schmidhuber1997low}, and indeed it bears many similarities to the definition we have proposed. In particular, under very mild assumptions it can be shown that:
\begin{align*}
    C(x_a, x_b \to x) > 0 \implies I(x_a : x_b) > 0 ,
\end{align*}
but that the reverse does not hold (proof in \cref{sec:proofs_ami}). Intuitively, two objects can share AMI for one of two reasons: (i) they share existing structure in $m_{x_a}$ and $m_{x_b}$ that need not be redundantly encoded in $m_x$, or (ii) they share novel structure in $m_x$ that only appears when they are considered jointly and does not exist in either $m_{x_a}$ or $m_{x_b}$ alone. $C(x_a, x_b \to x)$ measures only the latter case, disentangling the emergence of novel compositional structure from simply the accumulation of more data with the same structure.


\paragraph{Partial information decomposition.}

Partial information decomposition (PID) separates the information that two sources $X_a$ and $X_b$ provide about a target $Y$ into redundant, unique, and synergistic components \citep{williams2010nonnegative}:
\begin{align*}
    I(X_a,X_b;Y)
    &=
    \mathrm{Red}(X_a,X_b;Y)
    + \mathrm{Unq}_a(X_a,X_b;Y) \\
    &\quad
    + \mathrm{Unq}_b(X_a,X_b;Y)
    + \mathrm{Syn}(X_a,X_b;Y) .
\end{align*}
The synergistic term measures information about $Y$ that is available from $(X_a,X_b)$ jointly but from neither source individually. It is therefore the closest Shannon-theoretic analogue of $C(x_a,x_b\to x)$: both notions aim to isolate information that becomes available only when the parts are considered jointly.

The two quantities nevertheless address different settings. PID is a property of a joint distribution over sources and a target, whereas $C(x_a,x_b\to x)$ is an algorithmic property of the internal structure of an individual object. This distinction is particularly important in the intrinsic part--whole setting. If the PID target is chosen to be the whole, $Y=(X_a,X_b)$, then we trivially have that:
\begin{align*}
    \mathrm{Red}(X_a,X_b;(X_a,X_b))
    &= I(X_a;X_b), \\
    \mathrm{Syn}(X_a,X_b;(X_a,X_b))
    &= 0 .
\end{align*}
Consequently, such a PID cannot detect structure that emerges when the sources are combined into the target. PID-based measures of linguistic compositionality instead introduce an external target, such as a word form generated from multiple semantic features \citep{socolof2022measuring}. These measures characterize the compositionality of a mapping between variables, rather than the compositional structure intrinsic to an object.

Finally, although PID requires its components to sum to the corresponding mutual information quantities, it does not uniquely determine how information should be allocated among the redundant, unique, and synergistic terms. Consequently, there is no generally accepted decomposition, and existing proposals satisfy different, sometimes incompatible, auxiliary axioms \citep{harder2013bivariate,matthias2025novel}. Our definition does not attempt a complete information decomposition and therefore avoids selecting among these alternatives: it directly measures the conditional description length $K(m_x\mid m_{x_a},m_{x_b})$ of the whole's structural model, thereby quantifying structural information that emerges only when the parts are considered jointly --- an algorithmic analogue of synergistic information.

\paragraph{Emergence and Kolmogorov's structure function.}

Emergence is commonly understood as the appearance of system-level properties that are novel relative to those of the system's constituents, making it conceptually close to compositionality. \citet{bedard2022algorithmic} propose an algorithmic formalization based on Kolmogorov's structure function \citep{kolmogorov1973tallinn,vereshchagin2004kolmogorov}. Intuitively, the structure function considers increasingly large complexity budgets for describing a model of an object and, at each budget, identifies the model $m$ that minimizes the remaining model-to-data codelength $l_m(x)$. A substantial drop indicates that permitting a more complex model reveals a new explanatory strategy that shortens the overall two-part description. Their definition associates emergence with the presence of several such drops, corresponding to multiple useful models of increasing complexity. For example, a physical system may admit a relatively simple thermodynamic description in terms of macroscopic variables, while larger model budgets permit more detailed descriptions involving microscopic dynamics and initial conditions.

However, the complexity of a model is not itself a scale in a decomposition of the observed object. Consequently, successive drops need not correspond to structure within progressively larger collections of parts: they may instead reflect different explanatory strategies for the same data. The framework therefore characterizes emergence across a hierarchy of model complexities, but does not specifically isolate structure that emerges when constituent objects are combined as $C(x_a,x_b\to x)$ does. The two approaches are thus complementary: the structure function captures the plurality of emergent explanations available at different levels of model complexity, whereas our definition quantifies emergent structure arising at particular composition steps.

\paragraph{Formal grammars and compositional semantics.}

In formal semantics and programming-language theory, compositionality is conventionally defined relative to a specified syntax and interpretation: the meaning of a compound expression is determined by the meanings of its constituents and the rules by which they are combined \citep{pagin2010compositionality}. If $f$ is a syntactic composition rule and $M$ is an interpretation, this principle can be expressed as:
\begin{align*}
    M(f(a,b)) = F_f(M(a),M(b)) ,
\end{align*}
where $F_f$ is the corresponding operation in the semantic domain. Category-theoretic formulations express the same principle through structure-preserving maps, such as functors, between syntactic and semantic structures. These frameworks characterize compositionality only once the constituents, composition rules, and semantic domain have been specified; they do not by themselves quantify how much compositional structure an object intrinsically contains, nor do they identify that structure from arbitrary data.

Our definition addresses these complementary questions. A grammar or functional program may itself constitute the model $m_x$, in which case $C(x_a,x_b\to x)$ measures the additional information required to describe the grammar or library of the whole from those of its parts. In functional-programming terms, this resembles the cost of refactoring: combining $x_a$ and $x_b$ may reveal a reusable abstraction that was not justified by either part separately. In this way, $C(x_a,x_b\to x)$ is complementary to these more traditional views of compositionality: it can describe the emergence of compositional structure in a formal grammar.

\paragraph{Assembly theory.}

Assembly theory grew out of origins-of-life research as an attempt to differentiate living from non-living objects \citep{sharma2023assembly}. It treats a living object as something that exists in large numbers of copies and is built from simpler pieces through a sequence of allowed physical events. Once a piece has been built, it can be reused, so the object's \emph{assembly index} is the length of its shortest construction path. Assembly theory is therefore an MDL-based framework for characterizing living systems in which composition and hierarchy are central. In more technical ways, however, it differs substantially from our definition of compositionality. Among other things, our definition measures novel structure in the model of an individual object relative to the models of its parts, without requiring a physical construction space or observed copies. It also extends to more abstract forms of composition and distinguishes structure from random detail within an individual object.

\paragraph{Symbiogenesis.}

Symbiogenesis is the evolutionary idea that new kinds of organisms can arise when previously separate living systems join and become interdependent \citep{sagan1967origin}. The familiar example is the bacterial ancestor of mitochondria becoming part of a host cell, giving rise to eukaryotic cells. \citet{arcas2025intelligence} casts this as a general route to hierarchical complexity in which smaller replicators combine into a larger replicator, which can then combine with others; the total structure involves describing both the parts as well as how they fit together into the larger whole. Interestingly, in this computational account, the parts and whole carry instructions for their own reproduction --- i.e., they are \emph{self-encoding}. Our definition places no such requirement on an object's parts or whole; it measures the novel structure that appears when the parts are considered together, however it arose. Thus, symbiogenesis offers a plausible mechanism for the emergence of the structure that $C(x_a, x_b \to x)$ measures, and for its prevalence in the natural world.

\section{Generating compositional structure}
\label{sec:generating}

The previous section described a rigorous framework for quantifying compositionality and characterizing an object's compositional structure profile. We now turn to the inverse problem: given a pre-specified compositional structure profile, can we \emph{generate} objects that satisfy it?

For the remainder of this section, we consider restricted and tractable coding schemes $f$ and $g$ as described in \cref{sec:approximating_compositional_structure}, which define an empirical compositionality score
$
C^{f,g}(x_1,\dots,x_k\to x)
:=
L_{\mathrm{cond}}^g
\bigl(
    m_x^f
    \mid
    m_{x_1}^f,\dots,m_{x_k}^f
\bigr)
$
that we can compute.
For an object $x$, a particular decomposition tree $T$ gives us the empirical compositional structure profile:
\begin{align}
    \mathcal{P}_T^{f,g}(x) := \left(C_{T,x}^{f,g}(v)\right)_{v\in \mathrm{Int}(T)} \in \mathbb{R}_{\ge 0}^{\mathrm{Int}(T)} .
\end{align}
This profile is an indexed family of local compositionality values, one for each composition step in the tree. From this profile, we may define any scalar functional $s:\mathbb R_{\ge 0}^{\,\mathrm{Int}(T)}\to\mathbb R$, where the quantity $s\left(\mathcal{P}_T^{f,g}(x)\right)$ then defines a scalar summary of the compositional structure of $x$ under $T$. This summary quantity, if well behaved, can define a goal to be maximized by learning algorithms operating over the space of $x$'s.  

This allows us to generate objects with desired compositional structure profiles by defining the reward:
\begin{align}
    \label{eq:reward}
    R(x \ ; \ T, f, g, s) = s\left(\mathcal{P}_T^{f,g}(x)\right) .
\end{align}
and then using any optimization or sampling procedure to search for objects with high reward.
In sum, each of our below experiments is fully characterized by:
\begin{itemize}
    \item A domain $\mathcal{X}$ over which we can search for objects with high reward. Here, we consider two domains: symbolic strings and images.
    \item A pre-specified decomposition tree $T$, coding schemes $f$ and $g$, and scalar functional $s$ which jointly define the reward $R(x \ ; \ T, f, g, s)$.
    \item An optimization or search algorithm capable of discovering objects $x \in \mathcal{X}$ with high reward.
\end{itemize}

\subsection{Compositional strings}

We first consider a deliberately minimal domain: fixed-length symbolic strings with no externally specified semantics. Concretely, the domain is $\mathcal{X}=\{0,\dots,7\}^{128}$; strings of length $128$ with a vocabulary of $8$ symbols. Methods are briefly described below and illustrated in \cref{fig:strings}a; \cref{sec:experiment_details_strings} contains further details.

\begin{figure}[ht]
    \centering
    \includegraphics[width=\linewidth]{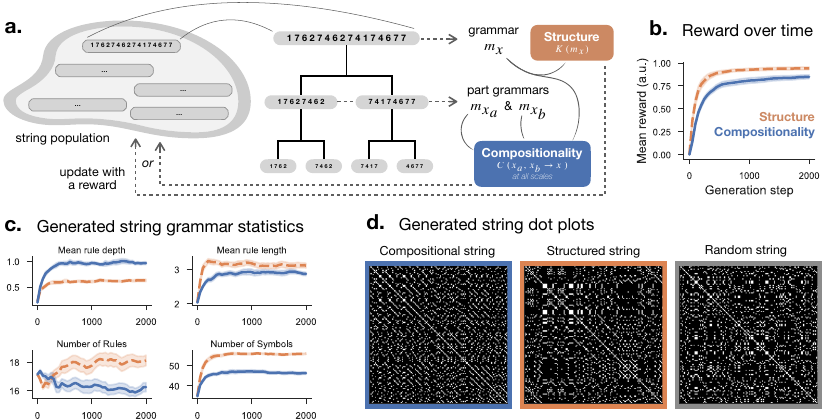}
    \caption{\textbf{Generating compositional strings.}
    \textbf{a.} We evolve a population of strings using either (i) a reward that maximizes multi-scale compositional structure using our definition (\textcolor{MidnightBlue}{\textbf{blue}}; \cref{eq:experiment_reward}) or (ii) an alternative reward that just maximizes structure without any pressure for compositionality (\textcolor{BurntOrange!50!RawSienna}{\textbf{orange}}; $K(m_x)$).
    \textbf{b.} Growth of the average population reward over time. Both the compositionality and structure rewards are successfully optimized. We plot in arbitrary units (a.u.) on the y-axis normalized based on the top score for each reward.
    \textbf{c.} Statistics of grammars fit to strings over time, when optimizing each reward. Grammars fit to strings generated by the compositionality reward are significantly deeper, with more nested levels of hierarchy, despite being shorter than those of strings generated from the structure reward in terms of rule length, number of rules, and number of symbols.
    \textbf{d.} Dot plots of the highest-reward string optimized using the compositionality and structure rewards, and for a random string. Strings generated from the compositionality reward have substantial repeated patterns, corresponding to contiguous diagonal lines off of the main diagonal. These patterns themselves contain repeated substrings, corresponding to shorter diagonal lines appearing within the same block of rows/columns. In contrast, strings generated from the structure reward exhibit more random segments, and when repeats do occur they tend to be longer and less hierarchical.
    \textbf{Seeds.} Shaded regions in panels (b, c) reflect the standard error of the mean across $10$ seeds. Panel (d) shows results for one representative seed only; dot plots for all seeds are provided in \cref{sec:experiment_details_strings}.}
    \label{fig:strings}
\end{figure}

\paragraph{MDL algorithm.}

For the two-part code $f$, we represent a string using a deterministic context-free grammar together with a residual sequence expressed through that grammar. Here, a grammar is simply a collection of named replacement rules: a rule assigns a name to a sequence symbols, which can also contain the names of previously defined rules. For example, the rules $A\to 01$ and $B\to AA$ allow $B$ to stand for $0101$. In standard terminology, the original symbols are \emph{terminals} and the rule names are \emph{non-terminals}. Because rules may invoke other rules, the grammar resembles a dictionary whose entries can themselves be defined from earlier entries, allowing repeated patterns to be represented hierarchically. The grammar serves as the model, while the residual specifies how its reusable components and any remaining terminal symbols are assembled into the complete string. We measure the two code components using an entropy code over all terminal symbols and rule names appearing in the grammar and residual. To fit a short grammar without introducing unjustified structure, we use an MDL-guided extension of RePair and MR-RePair \citep{larsson2000off,furuya2018mr}: it greedily replaces repeated maximal substrings with new rule names, but introduces a rule only when its estimated compression gain exceeds the cost of describing the rule itself.

\paragraph{Conditional model compression.}

At each join in a decomposition tree, two adjacent substrings are combined into a larger substring. We fit grammars separately to the two parts and to their combination, and use the conditional code $g$ to measure how much of the combined grammar cannot be recovered from the grammars fitted to the parts. Since rule names are arbitrary, we first relabel the rules in each part grammar to find a low-cost alignment with the combined grammar, using the nested structure of the rules to determine which ones correspond. Each rule in the combined grammar is then either encoded directly or by describing its differences from the best-matching aligned rule in either part grammar, whichever is shorter. Thus, carrying an existing rule from a part into the combined model costs little, whereas a new rule that becomes economical only after the parts are combined incurs a large cost.

\paragraph{Reward.}

Our reward is designed to favour strings that reveal novel grammar structure throughout the hierarchy, rather than concentrating it at a single scale. We therefore choose the decomposition tree $T$ to be a balanced binary tree: the complete string is recursively bisected into contiguous halves until reaching substrings of length four. Its internal nodes form five join scales, ranging from combinations of length-four substrings to the combination of the two length-64 halves of the complete string. At every join, we compute the conditional grammar cost described above and normalize it by the length of the combined substring. We then average these scores within each scale, so that scales containing many small joins do not dominate those containing a few large joins. Finally, we combine these scale averages using a power mean with $p=0.75$. Choosing $p<1$ places greater pressure on the weakest scales than an arithmetic mean would, encouraging compositional structure to emerge more evenly across the hierarchy.

Formally, let $\mathcal{V}_\ell$ be the joins at scale $\ell$, let $r_\ell(x)$ denote their mean size-normalized compositionality, and let $L$ denote the total number of scales. Then:
\begin{align}
    \label{eq:experiment_reward}
    r_\ell(x) &= \frac{1}{|\mathcal{V}_\ell|}\sum_{v\in\mathcal{V}_\ell}\frac{C_{T,x}^{f,g}(v)}{|x_v|}, \\
    R(x) &= \left(\frac{1}{L}\sum_{\ell=1}^{L} r_\ell(x)^p\right)^{1/p}, \quad p=0.75 .
\end{align}

\paragraph{Optimization.}

We maximize this reward with a single-objective use of NSGA-II \citep{deb2002fast}. Starting from $10{,}000$ uniformly random strings, each generation produces $5{,}000$ offspring through tournament selection and two-point crossover, followed by independently resetting each symbol with probability $2/128$. Duplicate strings are removed, and elitist selection retains the best population from parents and offspring.

\paragraph{Results.}

We ran the evolutionary optimization for $2{,}000$ generations across $10$ seeds. In addition to our compositionality reward in \cref{eq:experiment_reward}, we also compared to a structure-based reward that optimizes for $L_{\text{model}}^f(m_x^f)$: the empirical estimate of string $x$'s final structural complexity $K(m_x)$. This comparison serves as an empirical test of our claim that compositionality captures a specific \emph{kind} of structure based on the interactions between parts at multiple scales, rather than simply the total amount of structure. We used identical hyperparameters to optimize both rewards. \cref{fig:strings}b shows the time-course of the rewards, both of which steadily grow before reaching a plateau, indicating they were successfully optimized.

To demonstrate the properties of our compositionality reward, we analyze the nested repeated structure present within the strings by computing summary statistics of their grammars, shown in \cref{fig:strings}c. Here, we fit these grammars using the MR-RePair algorithm \citep{furuya2018mr}, which recursively creates a rule whenever a pattern repeats at least once. Compared to structure-optimized strings, we find that compositionality-optimized strings result in substantially \emph{deeper} grammars, where the depth of a grammar rule denotes the number of recursive non-terminal symbol expansions needed before the flattened rule contains only terminal symbols. For example, a grammar describing a string with purely ``flat'' structure would consist entirely of depth-$0$ rules that each expand to a sequence of terminal symbols. Despite this, structure-optimized strings still result in highly complex grammars; in fact, these grammars tend to have \emph{more} rules and longer rule expansions than those of the compositionality-optimized strings. This illustrates the core property of our compositionality definition. In general, an object can be structured in multiple ways that do not require a rich and interconnected hierarchy of patterns --- this structure can be flat (as in long but low-depth grammar rules) and independent across the object (as in grammar rules that are only used in distinct regions of the object).

We provide a more visual demonstration of these distinct forms of structure in \cref{fig:strings}d, which shows ``dot plots'' for the highest-reward string using each reward function, as well as for a random string. Dot plots are commonly used in DNA analysis to identify repeated structures within a string \citep{gibbs1970diagram}, and build a symmetric binary $|x| \times |x|$ matrix where a given cell at index $(i, j)$ is $1$ if $x[i] = x[j]$ and $0$ otherwise. A repeated pattern appears in the plot as diagonal lines off of the main diagonal, making it easy to visually inspect the number, length, and locations of these patterns. Furthermore, if a pattern contains a smaller sub-pattern that appears elsewhere in the string, it will show up as a smaller diagonal line in the same block of rows spanned by the larger pattern (or, equivalently, block of columns due to the matrix's symmetry). For compositionality-optimized strings, we find that the dot plots contain a more dense distribution of patterns and sub-patterns than those of structure-optimized or random strings. Structure-optimized strings do still exhibit repeated patterns, but they tend to be longer, more sparse, and less hierarchical. Overall, the dot plots of structure-optimized strings look more similar to those of random strings, but the addition of a few long diagonals.

\subsection{Compositional images}

We next consider a more complex domain in which humans can readily recognize compositional structure: images. We generate $3$-channel images of size $64 \times 64$, with values discretized to $4$ possible intensities, resulting in a domain $\mathcal{X} = \{0,\dots,3\}^{64 \times 64 \times 3}$. Since nothing in our reward prescribes these images to be ``natural'', we expect them only to contain abstract compositional structure. Methods are briefly described below; \cref{sec:experiment_details_images} contains further details.

\begin{figure}[ht]
    \centering
    \includegraphics[width=\linewidth]{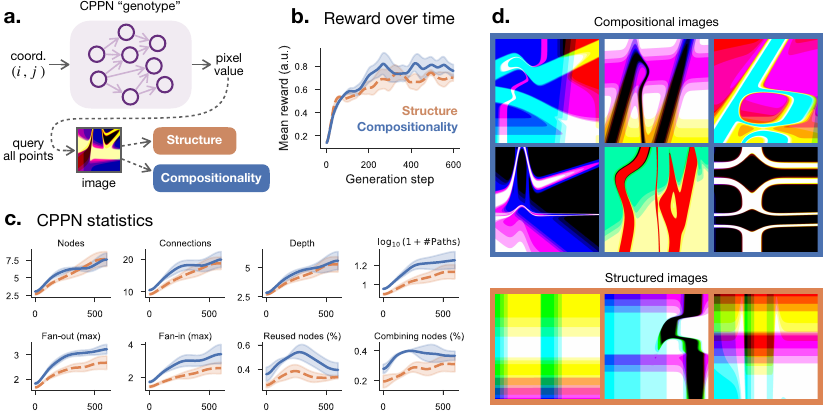}
    \caption{\textbf{Generating compositional images.}
    \textbf{a.} We evolve a population of images using either (i) a reward that maximizes multi-scale compositional structure using our definition (\textcolor{MidnightBlue}{\textbf{blue}}; \cref{eq:experiment_reward}) or (ii) an alternative reward that just maximizes structure without any pressure for compositionality (\textcolor{BurntOrange!50!RawSienna}{\textbf{orange}}; $K(m_x)$). Evolved genomes are Compositional Pattern-Producing Networks (CPPNs) that generate image phenotypes.
    \textbf{b.} Growth of the average population reward over time. Both the compositionality and structure rewards are successfully optimized. We plot in arbitrary units (a.u.) on the y-axis normalized based on the top score for each reward.
    \textbf{c.} Statistics of image-generating CPPNs over time, when optimizing each reward. Both rewards produce CPPNs of similar size (nodes and connections), but compositionality-optimized CPPNs exhibit greater feature recombination. See main text and \cref{sec:experiment_details_images} for descriptions of each statistic.
    \textbf{d.} Example images generated as a result of optimizing each reward. Optimizing for compositionality results in images with unified features, whereas optimizing for structure results in more disjoint images. More examples are provided in \cref{sec:experiment_details_images}.
    \textbf{Seeds.} Shaded regions in panels (b, c) reflect the standard error of the mean across $3$ seeds.}
    \label{fig:images}
\end{figure}

\paragraph{MDL algorithm.}

Fixed-pattern grammars are too restrictive to capture the shapes, textures, and spatial relationships that make images interesting. We therefore use a coordinate-based neural representation, following the general approach used by neural radiance fields and compositional pattern-producing networks (CPPNs) \citep{mildenhall2020nerf,stanley2007compositional}. For every image patch $x_v$, we fit a fixed-architecture network that maps a normalized two-dimensional coordinate to a categorical distribution over the four possible intensities in each channel. Because one set of weights must account for every coordinate, spatial regularities can be represented by the network while idiosyncratic pixels remain in the model-to-data code.

Merely training a flexible network to reproduce a single patch can lead it to memorize idiosyncratic pixels rather than capture structure that genuinely compresses the image. We therefore regularize the fitting procedure, most importantly by training with maximum likelihood on a random half of the patch's pixels and using the remaining half as held-out data for early stopping. The selected network is the checkpoint with the lowest held-out cross-entropy, and we retain it only if it improves upon a constant model that predicts the marginal intensity distribution of each channel. This fallback prevents the network from attributing spurious spatial structure to a patch that is better described without spatial dependence.

To complete the two-part code, we must estimate both the information stored in the fitted network and the information left unexplained by it. Following learning-curve estimates of neural-network description length \citep{blier2018description}, we estimate the first quantity from the cumulative training loss incurred while learning the network. At each point in training, we subtract the loss of the final selected model: the remainder measures predictive errors caused by not yet having learned the model, rather than errors that remain even after it has been learned. Accumulating this excess over training therefore provides a proxy for the number of bits absorbed into the network's weights. The model-to-data term is the selected model's negative log-likelihood over the complete patch. Together, these terms form our empirical two-part coding scheme $f$.

\paragraph{Conditional model compression.}

Directly describing one neural network given several others is complicated by the many parameter settings that implement the same function. Instead, we use the intuition that the conditional complexity of a target model is reflected by how much must be learned to fine-tune the given models into it. We implement this comparison at the level of predictions rather than parameters. At a join, the model $m_{x_v}^f$ fitted to the parent patch acts as a teacher. The four child models are combined into a routed student: each coordinate is sent to the model fitted to its quadrant, using that child's local coordinates. This unmodified student represents the hypothesis that the parent model is merely the juxtaposition of the four child models.

We then fine-tune copies of the child models to match the teacher's predictive distributions across the parent patch, again using held-out coordinates for early stopping. We retain the adapted student only when it improves upon the unmodified routed student on these coordinates. Our conditional code $g$ has two terms: a learning-curve estimate of the information required for this adaptation, computed by accumulating the distillation loss attributable to the student not yet having reached its selected state, and the remaining divergence between the selected student and teacher across the complete patch. If the structure of the whole consists only of the independent structures of its quadrants, with no novel structure revealed by considering them jointly, routing among the child models already reproduces the parent and both costs are near zero. Conversely, structure that becomes useful only when the quadrants are modeled together changes the parent's predictions in a way that requires substantial adaptation, producing a large conditional code. This gives a function-level surrogate for $L_{\mathrm{cond}}^g(m_{x_v}^f\mid m_{x_{v_1}}^f,\dots,m_{x_{v_4}}^f)$ without requiring the independently fitted networks to have aligned parameters.

\paragraph{Reward.}

Whereas a string has one spatial axis and can naturally be bisected, an image has two, so we recursively divide both axes to obtain a quadtree decomposition. Each patch is split into four quadrants until reaching $8\times8$ patches. The resulting joins combine $8\times8$ patches into $16\times16$ patches, then $16\times16$ patches into $32\times32$ patches, and finally the four $32\times32$ patches into the complete image. As in the string experiment, we normalize each conditional code by the number of spatial locations in its parent patch, average within each join scale, and combine the three scale averages with a power mean using $p=0.75$. This gives equal overall weight to each spatial scale while placing greater pressure on the weakest ones.

\paragraph{Optimization.}

The image space is far too large to search directly, so we search over compact procedures that generate images. Specifically, we use compositional pattern-producing networks (CPPNs): coordinate-based neural networks that specify a single image by mapping each normalized spatial coordinate to its channel values \citep{stanley2007compositional}. We can thus treat a CPPN's topology and parameters as a genotype and the image obtained by evaluating it across the coordinate grid as its phenotype. Its three continuous outputs are quantized to the four intensity levels in each channel.

We optimize a population of $200$ such genomes for $600$ generations using the NeuroEvolution of Augmenting Topologies (NEAT) \citep{stanley2002evolving,stanley2004competitive} algorithm. NEAT applies crossover and mutations to network weights, activation functions, nodes, and connections while maintaining species of related networks, allowing the complexity of the image-generating process to grow during search. This resembles the Picbreeder study \citep{secretan2008picbreeder,secretan2011picbreeder} (and we use very similar hyperparameters for the evolutionary algorithm), except that our compositionality reward replaces human selection. Both the generator and the empirical coding scheme use coordinate networks; the experiment therefore searches for spatial structure that the chosen coding scheme is capable of recognizing, rather than claiming to cover every possible form of image structure.

\paragraph{Results.}

We ran the evolutionary optimization for $600$ generations and only $3$ seeds, given the higher computational cost of this experiment. As in our string generation experiments, we compared compositionality-optimized images to structure-optimized ones. \cref{fig:images}b shows that both rewards increase over the course of generations before stabilizing.

We quantitatively analyze the structure of the resulting images by computing simple network summary statistics of the CPPN genomes that produce them in \cref{fig:images}c. We find that the average number of nodes in the evolved networks, as well as the connections between them, is comparable for the two rewards throughout the evolutionary process. The average network depth --- which measures the longest path (in connections) from an input to an output node --- also evolves similarly. The CPPNs instead seem to differ along more fine-grained metrics that suggest the compositionality reward produces more interconnected networks. For instance, the number of distinct input-to-output paths is greater when optimizing the compositionality reward. Features represented by hidden nodes are also reused by more than one downstream node more often, and fan-out to a greater number of nodes. The same is true for the inputs to these hidden units: multiple inputs are in general combined to form downstream features more often, and in greater number. Taken together, these results suggest that our compositionality reward results in unified structures and feature recombinations rather than independent structures, in line with our expectations.

Our generated images themselves produce further evidence along these lines; \cref{fig:images}d shows some representative examples, and more are provided in \cref{sec:experiment_details_images}. Compositionality-optimized images reliably look more unified: they have distinct central themes involving repeated motifs, shapes, and colour palettes (although it is difficult to visually establish the presence of feature hierarchies in an objective way). In contrast, structure-optimized images appear significantly more disjoint: a common trend is that they simply pack many bands of distinct horizontal and vertical colours into the image. While these bands provide learnable information, they are structurally independent from one another. These differences are precisely what our definition of compositional structure is intended to capture.

\section{Discussion}
\label{sec:discussion}

We defined compositionality as the novel shared structure revealed when parts are modeled together. Formally, it is the information in the best model of a whole that cannot be recovered from the best models of its parts. Applying this comparison at successive nodes of a decomposition tree yields a profile that reflects the locations and scales at which novel compositional structure emerges. We tested this definition by using it as a generative objective. Optimizing strings and images for prescribed profiles produced objects with recognizable nested reuse and coherent organization across scales. Because the objective constrains the form of this organization while remaining agnostic to its particular content, these results suggest that the definition isolates the intended part--whole organization rather than merely rewarding complexity, regularity, or shared information.

\paragraph{The success of deep learning.}

This view provides one explanation for why depth is a useful inductive bias. If natural data reveals novel structure whenever smaller regions are combined into larger ones, then an effective model should likewise build representations of larger-scale structure from those available at smaller scales. Successive layers of a deep network provide exactly this form of capacity: later features can represent combinations and reorganizations of earlier ones, without requiring the same composition rule at every scale. Standard architectures nevertheless impose a fixed maximum depth and are usually trained on \textit{iid} datapoints of a fixed extent. For nonstationary streams whose structure continues to grow across temporal scales, the framework instead motivates more flexible architectures whose effective depth or hierarchy can expand whenever larger contexts reveal additional compositional structure. Reasoning methods \citep{wei2022chain} and looped layers \citep{dehghani2018universal} provide potential building blocks towards this goal.

\paragraph{Curricula and open-ended learning.}

The same idea shifts emphasis from specifying a compositional model directly to constructing data whose compression requires compositional structure. Let $x_a$ be a learner's history and $x_b$ a new segment of experience. A large $C(x_a,x_b\to x)$ means that $x_b$ does more than repeat known structure, but also does more than contribute an unrelated fact: considered with the history, it induces a new joint account that reuses or reorganizes earlier structure. A curriculum can therefore be scored by the local compositionality introduced at each step, or by a functional of the resulting left-to-right compositional structure profile. Searching over orderings could then test whether curricula with high scores produce representations that are more reusable and factorized. This interpretation is consistent with the finding of \citet{kumar2025questioning} that networks evolved through human-guided, open-ended search develop more unified and factorized representations than networks trained only to reproduce their endpoints. Our framework gives a precise hypothesis for this difference: the evolutionary history contributes substantial compositional structure that is absent from the endpoint alone.

This objective can also guide an agent that chooses its own experience. A policy could be rewarded for selecting $x_b$ that maximizes an empirical approximation to $C(x_a,x_b\to x)$, thereby constructing a curriculum through interaction. This is closely related to intrinsic curiosity based on compression progress \citep{schmidhuber2009driven,schmidhuber2010formal} and to objectives that seek learnable structure such as Epiplexity \citep{finzi2026epiplexity}. The difference is the constraint placed on \emph{how} the learner's model changes: the reward favours novel structure that becomes useful through its relationship to prior experience, rather than novelty or learnability alone. Repeated over time, this provides an operational criterion for open-ended learning in which knowledge continues to grow by building on itself.

\paragraph{Future work and limitations.}

The ideal quantities in our definition are uncomputable, and the empirical version depends on restricted model classes, imperfect MDL search, and conditional codes that provide only operational surrogates for algorithmic complexity. Our experiments show that useful approximations are possible, but not how stable the resulting scores are across compressors or optimization procedures. Establishing this robustness, and identifying which approximation errors preserve comparisons between objects, are important next steps.

A second, crucial open question that we highlighted in \cref{sec:defining} is: which decompositions should be considered for a particular object? The compositional structure profiles we defined here are relative to a decomposition tree, and different trees can expose different structures in the same object. However, is there any sense in which an object admits an ``intrinsic'' decomposition, and if so how can it be defined? A systematic account should determine which decompositions expose useful, robust structure and which merely introduce artifacts. We conjectured that the tree which maximizes the sum of compositionality scores is a particularly good candidate for a canonical decomposition, but this must be further investigated. Ultimately, these questions reflect a broader feature of compositionality: any claim about how a whole is composed must identify its constituent parts. Traditional accounts often fix these constituents in advance through a syntax, a set of semantic factors, or an object decomposition; our framework makes the choice explicit, allowing the decomposition itself to become an object of study.

\bibliographystyle{apalike}
\bibliography{references}

\begin{appendices}
\crefalias{section}{appendix}   
\crefalias{subsection}{appendix} 
\counterwithin{figure}{section} 
\counterwithin{table}{section}  

\section{Practical approximation of compositional structure}
\label{sec:approximating_compositional_structure_details}

The definitions in the main text are uncomputable, since they are rooted in algorithmic information theory. In practice, however, they can be approximated using empirical compressors, model classes, learning algorithms, and minimum description length procedures.

Estimating compositionality empirically involves three distinct difficulties:
\begin{enumerate}
    \item $K(m)$ is uncomputable, yet is required to define the Occam's razor model $m_x$ in \cref{eq:occam}.
    \item $K(m_x\mid m_{x_1},\dots,m_{x_k})$, which defines the amount of novel compositional structure in \cref{def:compositionality}, is likewise uncomputable.
    \item Identifying $m_x$ requires searching over the potentially infinite model class $\mathcal{M}$ and determining that no unexamined model yields a shorter two-part description.
\end{enumerate}

The third difficulty is distinct from evaluating the complexity of any particular candidate model. Kolmogorov complexity is upper semicomputable: by running candidate programs in parallel and retaining the shortest description found so far, one obtains successively tighter upper bounds that eventually reach the optimal value, although there is no general way to recognize when this has occurred. \citet{bedard2022algorithmic} make the analogous observation for Kolmogorov's structure function. This gives the theoretical search an \emph{anytime} interpretation: continued search can reveal better models and shorter descriptions, but it does not provide a computable certificate that the current model is optimal. A practical MDL procedure replaces this open-ended search with a specified model class, optimization method, and finite search budget.

We replace the uncomputable quantities with empirical code lengths induced by practical coding schemes. Let $f$ denote an empirical two-part coding scheme used to fit models by minimum description length, together with its associated model representation and search procedure. Let $g$ denote an empirical conditional coding scheme used to measure the additional cost of describing the model of the whole given the models of the parts. These coding schemes may be specialized to the particular data modality, model class, and training procedure under consideration.

Using the two-part coding scheme $f$, the Occam's razor model $m_x$ is replaced by an empirical MDL model. Let:
\begin{align}
    L_m^f(x)
    :=
    L_{\mathrm{model}}^f(m)
    +
    L_{\mathrm{data}}^f(x\mid m) ,
\end{align}
where $L_{\mathrm{model}}^f(m)$ is the code length of the model $m$, and $L_{\mathrm{data}}^f(x\mid m)$ is the code length of the data $x$ given that model. The ideal empirical minimum-description-length models are:
\begin{align}
    \mathcal{M}^{f,*}(x)
    :=
    \argmin_{m\in\mathcal{M}} L_m^f(x) .
\end{align}
Assuming there is a unique model in $\mathcal{M}^{f,*}(x)$ with minimal model-code length, define the ideal empirical Occam's razor model by:
\begin{align}
    m_x^{f,*}
    :=
    \argmin_{m\in\mathcal{M}^{f,*}(x)}
    L_{\mathrm{model}}^f(m) .
\end{align}

Thus, $m_x^{f,*}$ first minimizes the total two-part code length under $f$ and then, among ties, minimizes the model-description part of the code. This is the empirical analogue of the theoretical construction in \cref{eq:occam}. In practice, a finite run of the search procedure generally returns a best-so-far model $m_x^f$, without guaranteeing that $m_x^f=m_x^{f,*}$. If the procedure retains its incumbent, increasing the search budget cannot worsen the best observed two-part code length, although it may change both the selected model and the allocation of bits between its model and residual descriptions.

The tie-breaking distinction remains conceptually important even when exact ties in total code length are rare. Depending on the coding scheme, information may be shifted between the model-description term and the data-given-model term, so different models can represent different allocations of complexity even when their total description lengths are similar or identical. Rather than explicitly solving the two-stage optimization above, a practical implementation may use a single two-part MDL algorithm that simultaneously accounts for model-description length, residual code length, and the search for a suitable model. The choice of optimizer and search budget is therefore part of the empirical procedure $f$. For brevity, the main text writes $m_x^f$ for the model returned by this procedure.

Given the resulting empirical models $m_x^f,m_{x_1}^f,\dots,m_{x_k}^f$, the corresponding empirical estimate of compositionality under the conditional coding scheme $g$ is:
\begin{align}
    C^{f,g}(x_1,\dots,x_k\to x)
    :=
    L_{\mathrm{cond}}^g
    \bigl(
        m_x^f
        \mid
        m_{x_1}^f,\dots,m_{x_k}^f
    \bigr) ,
\end{align}
where $L_{\mathrm{cond}}^g(\cdot\mid\cdot)$ denotes the conditional code length induced by $g$. Thus, $C^{f,g}(x_1,\dots,x_k\to x)$ should be interpreted as a coding-strategy-dependent estimate of compositional structure: it measures how many additional bits are required, under $g$, to describe the empirically fitted model of the whole given the empirically fitted models of the parts.

This is not an estimate in the sense of providing formal upper or lower bounds on the underlying algorithmic-information quantities. Rather, it yields an operational notion of compositionality relative to a chosen pair of empirical procedures: a procedure $f$ for fitting two-part MDL models through a finite search, and a procedure $g$ for conditionally compressing the whole-model given the part-models.

\section{The maximal decomposition}
\label{sec:maximal_decomposition}

In \cref{def:compositionality_profile}, we introduced the concept of a compositional structure profile $\mathcal{P}_T(x) := \left(C_{T,x}(v)\right)_{v\in \mathrm{Int}(T)}$
, which is defined with respect to a rooted decomposition tree $T$. We then asked whether or not a given object $x$ admits an ``intrinsic'', ``true'', or ``canonical'' decomposition that reflects its genuine compositional structure. We conjectured that the decomposition which maximizes the sum of novel compositional structure quantities throughout the tree was a promising candidate. Here, we provide some initial arguments that motivate this conjecture.

Denote the sum of novel compositional structure quantities under a given decomposition $T$ as:
\begin{align}
    S_T(x) := \sum_{v\in\mathrm{Int}(T)} C_{T,x}(v) ,
\end{align}
and define the maximal decomposition of an object $x$ as:
\begin{align}
    T_x^{\mathrm{max}} := \argmax_{T} S_T(x) .
\end{align}

Intuitively, what the maximal tree accomplishes is that it preferentially joins objects that share novel structure and parts, which maps on to how we intuitively think about compositionality. In contrast, if we were considering \emph{minimal} decompositions, we would be joining objects that are either structurally identical or algorithmically independent. At a high level, this is the strongest motivation for our conjecture.

Second, while $S_{T_x^{\mathrm{max}}}(x)$ is lower-bounded by $K(m_x)$ (it provides \emph{a} possible strategy for compressing the final model, but not necessarily the optimal one; see \cref{sec:proofs_stx_kmx} for theorem and proof), there are strings for which $S_{T_x^{\mathrm{max}}}(x) > K(m_x)$. This means that $S_{T_x^{\mathrm{max}}}(x)$ does not simply reduce to sophistication but rather quantifies something different. It remains to be shown, theoretically and empirically, that $S_{T_x^{\mathrm{max}}}(x) > K(m_x)$ for objects we intuitively consider compositional, and conversely that $S_{T_x^{\mathrm{max}}}(x) \approx K(m_x)$ for objects we believe to be uncompositional. In contrast, it is trivial to show that if we were to instead consider \emph{minimal} decompositions $T_x^{\mathrm{min}}$, there would always exist a trivial solution that merges every individual bit of a string in a single join and results in $S_{T_x^{\mathrm{min}}}(x) = K(m_x) ,\;\forall x$ (see \cref{sec:proofs_stx_kmx} for theorem and proof) --- in other words, the minimal decomposition would be independent of $x$ and would thus never capture its intrinsic compositional structure.

Third, we should hope that if a string is measured to be ``compositional'' according to some decomposition, it remains compositional under that same decomposition when it is embedded as a \emph{substring} within a larger structure. If this were not the case, it would mean that the compositional structure of an object is context-dependent, and we would not be able to say that compositionality is an intrinsic and objective property of the object. $T_x^{\mathrm{max}}$ happens to have this property, so we can say that it is a recursively-consistent metric. In particular, whenever a substring occurs as a node of a maximizing tree, the subtree rooted at that node is itself a maximizing decomposition of the substring (see \cref{sec:proofs_recursive_consistency} for theorem and proof)

Fourth, considering the maximal tree solves a technical problem: why consider binary joining operations as we have primarily focused on here? Why not join triplets, or quadruplets of substrings? While this choice may seem arbitrary, it is cleanly resolved if we only consider maximal decompositions. The reason is that the novel structure generated from joining a triplet is guaranteed to be less than or equal to what is generated from a pair of binary joining operations, where we first join two of the substrings, and then join the result with the remaining substring (see \cref{sec:proofs_binary_refinement} for theorem and proof). Once again, minimal decompositions give the opposite result because they always admit a trivial solution that merges every individual bit of the string in a single join.

\section{Proofs}
\label{sec:proofs}

\subsection{Proof of the relationship between compositionality and algorithmic mutual information}
\label{sec:proofs_ami}

\begin{theorem}[Compositionality implies algorithmic mutual information]
\label{thm:ami}
Adopt the notation from  \cref{sec:defining}. Suppose that a conditionally novel model of the whole does not tie the separate two-part descriptions of the parts; that is:
\begin{align}
    \label{eq:no_neutral_ties}
    C(x_a,x_b\to x)>0
    \quad\implies\quad
    K(m_x)+l_{m_x}(x)
    \neq
    K(m_{x_a})+l_{m_{x_a}}(x_a)
    +
    K(m_{x_b})+l_{m_{x_b}}(x_b) .
\end{align}
Then:
\begin{align}
    C(x_a,x_b\to x)>0
    \implies
    I(x_a:x_b)>0 ,
\end{align}
but reverse implication does not hold in general.
\end{theorem}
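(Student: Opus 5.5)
We argue by contraposition, working throughout up to the suppressed lower-order terms of the Note on suppression. Assume $I(x_a:x_b)\le 0$, i.e.\ $K(x)=K(x_a,x_b)\ge K(x_a)+K(x_b)$. We then show that the Occam's razor model of the whole is essentially recoverable from $(m_{x_a},m_{x_b})$, so $C(x_a,x_b\to x)=0$. The main tool is the ``juxtaposed'' model $\tilde m$, built computably from the pair $(m_{x_a},m_{x_b})$: it encodes $x_a$ with $m_{x_a}$, encodes $x_b$ with $m_{x_b}$, and records the boundary. We assume $\mathcal{M}$ is closed under this operation, with only logarithmic overhead. This is the ``trivial combination'' case discussed after \cref{def:compositionality}. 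It satisfies $K(\tilde m\mid m_{x_a},m_{x_b})=0$ and
\begin{align*}
L_{\tilde m}(x)\le K(m_{x_a})+l_{m_{x_a}}(x_a)+K(m_{x_b})+l_{m_{x_b}}(x_b).
\end{align*}

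The first step is to bound total code lengths in both directions. Since the model class is expressive enough that $L_{m_y}(y)=K(y)$ for $y\in\{x,x_a,x_b\}$, the right-hand side above equals $K(x_a)+K(x_b)$. Optimality of $m_x$ gives
\begin{align*}
K(x)=L_{m_x}(x)\le L_{\tilde m}(x)\le K(x_a)+K(x_b).
\end{align*}
Combined with the contrapositive hypothesis $K(x)\ge K(x_a)+K(x_b)$, every inequality in this chain is an equality. So $\tilde m$ is itself an MDL model of $x$, i.e.\ $\tilde m\in\mathcal{M}^*(x)$. Moreover, the two-part length of $m_x$ ties exactly the summed two-part descriptions of the parts.

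The second step uses the no-ties hypothesis \cref{eq:no_neutral_ties}. Its contrapositive says that if $K(m_x)+l_{m_x}(x)$ equals the summed part descriptions, then $C(x_a,x_b\to x)=0$. We have just established exactly this equality, so $C(x_a,x_b\to x)=0$, which completes the forward implication. I expect this to be the delicate point, so I would also record the direct reason the hypothesis is natural. Because $\tilde m\in\mathcal{M}^*(x)$ and $m_x$ minimizes $K$ over $\mathcal{M}^*(x)$, the only way for $m_x$ to carry novel information is a tie in total length between $\tilde m$ and a genuinely different minimal model. The hypothesis rules out precisely this neutral tie. The main obstacle is therefore bookkeeping: the boundary-encoding overhead and the closure of $\mathcal{M}$ under juxtaposition must be absorbed into the suppressed $\mathcal{O}(\log n)$ terms. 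We make these conventions explicit at the start of the proof.

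For the failure of the converse, we give a counterexample in which the parts share structure but no novel structure arises. Take $x_a=x_b=y$ with $y$ a long random string, whose Occam model is the trivial ``literal'' model, so that $m_{x_a}=m_{x_b}=m_y$. Then $I(x_a:x_b)=K(y)>0$. The model of $x=[y,y]$ (``output $y$ twice'') must contain $y$, however, and $m_y$ does not; so this example actually yields $C>0$ and is unsuitable. Instead, take $y$ to be highly structured and already fully captured by its own model, for instance $y$ sampled from a structured source with $m_y$ the source and $l_{m_y}(y)$ the random detail. Let $x_a$ and $x_b$ be two independent samples from that same source. Then $m_x$ is the same source model, up to a sample-count change, so $K(m_x\mid m_{x_a},m_{x_b})=0$. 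Yet $I(x_a:x_b)\approx K(m_y)>0$, because the shared model need only be described once. This is exactly case (i) of the AMI discussion, and it shows that the reverse implication fails.
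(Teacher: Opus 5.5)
Your proposal is correct and is essentially the paper's proof. The forward direction is the contrapositive of the paper's chain $K(x)=K(m_x)+l_{m_x}(x)\le K(m_{x_a})+l_{m_{x_a}}(x_a)+K(m_{x_b})+l_{m_{x_b}}(x_b)=K(x_a)+K(x_b)$ combined with the no-tie hypothesis. Your independent-samples example is a concrete instance of the paper's counterexample, in which a single model $m$ is the Occam's razor model of $x_a$, $x_b$ and $x$, so that $C=K(m\mid m,m)=0$ while $I\approx K(m)>0$.

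One minor correction to your discarded $[y,y]$ aside: Occam tie-breaking selects the low-complexity ``some length-$n$ string written twice'' model for $[y,y]$ with random $y$, not one containing $y$. That model is computable from $m_y$, so that example would also have worked; this does not affect your argument.
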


Intuitively, the theorem states that novel structure appearing only when the parts are modeled jointly generally makes the whole more compressible than the parts considered separately, and therefore implies positive algorithmic mutual information. The no-tie assumption excludes the exceptional case in which changing to a novel model of the whole reduces the model-description cost but increases the residual code length by exactly the same amount, leaving the total two-part description length unchanged. This is a mild assumption: changing the model will ordinarily change the total two-part compression cost, whereas an exact cancellation between its model and residual components requires a precise tie. The formal proof of the theorem follows.

\begin{notebox}{Technical status of the mutual information result}
    The above theorem and proof below use the paper-wide convention of suppressing additive coding terms and are therefore an idealized description-length argument, not a literal theorem about prefix Kolmogorov complexity without further qualifications. With full bookkeeping, combining independent descriptions, identifying an optimal two-part code with $K(x)$, and writing $K(m_x \mid m_{x_a}, m_{x_b}) = 0$ each introduce machine-dependent overheads; the relevant bounds may also contain logarithmic terms when lengths or boundaries must be encoded. Moreover, because $x=[x_a,x_b]$ denotes raw concatenation, identifying $K(x)$ with the joint complexity $K(x_a,x_b)$ assumes that the decomposition boundary is available as side information. Consequently, the strict tests $C>0$ and $I>0$ can be altered by the suppressed terms. A fully rigorous variant would replace zero by significance thresholds exceeding the relevant overheads and strengthen the no-tie assumption to require a compression margin exceeding them. The idealized result is retained because it isolates the intended qualitative relationship: novel joint structure ordinarily yields a compression advantage, whereas mutual information can also arise merely from reusing structure already present in the parts.
\end{notebox}

\begin{proof}
The two parts can always be compressed independently by concatenating their model descriptions and residual codes. Because \(m_x\) is an optimal model of the whole, we have that:
\begin{align}
    K(m_x)+l_{m_x}(x)
    \leq
    K(m_{x_a})+l_{m_{x_a}}(x_a)
    +
    K(m_{x_b})+l_{m_{x_b}}(x_b) .
\end{align}
If \(C(x_a,x_b\to x)>0\), the no-tie assumption in \cref{eq:no_neutral_ties} makes this inequality strict. Therefore:
\begin{align}
    K(x)
    &=
    K(m_x)+l_{m_x}(x) \\
    &<
    K(m_{x_a})+l_{m_{x_a}}(x_a)
    +
    K(m_{x_b})+l_{m_{x_b}}(x_b) \\
    &=
    K(x_a)+K(x_b) .
\end{align}
Hence:
\begin{align}
    I(x_a:x_b)
    =
    K(x_a)+K(x_b)-K(x)
    >
    0 .
\end{align}

For the reverse direction, consider a case in which the same model \(m\) is the Occam's razor model of \(x_a\), \(x_b\), and \(x\), and write \(c=K(m)>0\). The whole can reuse this model description:
\begin{align}
    K(x)
    &\leq
    c+l_m(x_a)+l_m(x_b) \\
    &<
    2c+l_m(x_a)+l_m(x_b) \\
    &=
    K(x_a)+K(x_b) .
\end{align}
Thus \(I(x_a:x_b)>0\). Nevertheless, we have that:
\begin{align}
    C(x_a,x_b\to x)
    =
    K(m\mid m,m)
    =
    0 .
\end{align}
Therefore, positive algorithmic mutual information does not imply novel compositional structure.
\end{proof}

\subsection{Proofs of the relationships between \texorpdfstring{$S_T(x)$ and $K(m_x)$}{ST(x) and K(mx)}}
\label{sec:proofs_stx_kmx}

This section establishes the bounds used in \cref{sec:maximal_decomposition}. \Cref{thm:total_compositionality} relates $S_T(x)$ to $K(m_x)$ for an arbitrary decomposition, accounting explicitly for the tree shape and leaf models. \Cref{cor:fully_refined_maximum,cor:maximal_compositionality_bound} show that a maximizing decomposition can be chosen with single-bit leaves and, assuming a common empty model for individual bits, establish the corresponding lower bound on maximal compositionality. Finally, \cref{cor:minimal_complete} describes the trivial single-join solution for complete minimal decompositions, while \cref{cor:minimal_unrestricted} shows that allowing incomplete decompositions yields a zero minimum.

\begin{theorem}[Model complexity bounds total compositionality]
\label{thm:total_compositionality}
Let $T$ be a finite rooted, ordered decomposition tree for $x$,
and suppose that the Occam's razor model $m_{x_v}$ is
well-defined at every node $v$. Let $\tau_T$ denote the
unlabeled ordered tree shape, and let
$M_{\mathrm{leaf}} := (m_{x_\ell})_{\ell\in\mathrm{Leaves}(T)}$
list the leaf models in left-to-right order. Then:
\begin{align}
    S_T(x)
    \geq K(m_x) - K(\tau_T,M_{\mathrm{leaf}}) - O(1),
\end{align}
where the constant is independent of $x$ and $T$.
\end{theorem}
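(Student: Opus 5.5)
We prove the bound by turning the local conditional descriptions into a single program that reconstructs $m_x$ from the tree shape and the leaf models. Fix $T$ and, for each internal node $v$ with children $v_1,\dots,v_{k_v}$, let $p_v$ be a shortest program witnessing $C_{T,x}(v)=K(m_{x_v}\mid m_{x_{v_1}},\dots,m_{x_{v_{k_v}}})$. Each $p_v$ is self-delimiting, since $K$ is prefix complexity, so the programs $(p_v)_{v\in\mathrm{Int}(T)}$ can be concatenated without separators. The total length of this concatenation is exactly $S_T(x)$.

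We then define a fixed decoder $D$ that takes $\langle\tau_T,M_{\mathrm{leaf}}\rangle$ as auxiliary input and the concatenated string of programs. The decoder first reads $\tau_T$ and enumerates the internal nodes in a canonical order, namely post-order left to right, so every node is processed after its children. It places the leaf models $M_{\mathrm{leaf}}$ at the leaves in left-to-right order. For each internal node in turn, it parses the next self-delimiting program $p_v$ from the input and runs $U(p_v,\langle m_{x_{v_1}},\dots,m_{x_{v_{k_v}}}\rangle)$ to obtain $m_{x_v}$. By induction up the tree, it finally outputs $m_{x_{\mathrm{root}}}=m_x$.

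This construction gives
\[
K(m_x\mid \tau_T,M_{\mathrm{leaf}})\le S_T(x)+O(1),
\]
where the constant is the length of the description of $D$. That constant does not depend on $x$ or $T$, because $D$ is a single uniform procedure for all trees and all inputs. Combining this with the standard inequality $K(m_x)\le K(m_x\mid y)+K(y)+O(1)$, taking $y=\langle\tau_T,M_{\mathrm{leaf}}\rangle$, we get $K(m_x)\le S_T(x)+K(\tau_T,M_{\mathrm{leaf}})+O(1)$. Rearranging gives the statement. That inequality holds with only an additive constant, and no logarithmic term, because a prefix program for $y$ is self-delimiting and can simply be placed in front of the conditional program.

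The main obstacle is the bookkeeping needed to keep the overhead at $O(1)$ and not $O(\log)$ terms. Three points matter here. First, the programs $p_v$ must be parseable without length headers; prefix-freeness gives this. Second, the decoder must know how many programs to read and which children each one uses; this information comes from $\tau_T$, which is part of the conditioning. Third, the tuple encoding used as conditional input to each $p_v$ must match the fixed encoding $\langle\cdot\rangle$ in the definition of $C$. Because $D$ forms that same encoding, this adds only a constant. Once these points are checked, the induction over the tree is routine.
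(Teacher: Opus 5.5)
Your proposal is correct and follows the paper's proof essentially step for step. Both take shortest prefix programs $p_v$ for the local conditional complexities and concatenate them in postorder. A fixed decoder reads the tree shape and leaf models and runs each $p_v$ on the already reconstructed child models, with prefix-freeness and $\tau_T$ removing any need for delimiters. The only difference is cosmetic: you first bound $K(m_x\mid \tau_T,M_{\mathrm{leaf}})$ and then apply $K(m_x)\le K(m_x\mid y)+K(y)+O(1)$, whereas the paper directly prepends a shortest program for $(\tau_T,M_{\mathrm{leaf}})$ to the concatenated $p_v$'s, which amounts to the same construction.
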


\begin{proof}
For each internal node $v$ with ordered children
$v_1,\ldots,v_{k_v}$, choose a shortest prefix program $p_v$
that outputs $m_{x_v}$ given
$(m_{x_{v_1}},\ldots,m_{x_{v_{k_v}}})$. By definition,
\begin{align}
    |p_v|
    = K(m_{x_v}\mid m_{x_{v_1}},\ldots,m_{x_{v_{k_v}}})
    = C_{T,x}(v).
\end{align}
Construct a description consisting of a shortest prefix
program for $(\tau_T,M_{\mathrm{leaf}})$, followed by the
programs $p_v$ in postorder, so that children precede
their parents.

A fixed decoder first recovers the tree shape and leaf
models. It then processes the internal nodes in postorder,
running each $p_v$ with the already reconstructed child
models as its auxiliary input. Since the programs are
prefix-free and the tree determines their number and
order, no additional delimiters are required. At the
root, the decoder outputs $m_x$. Consequently,
\begin{align}
    K(m_x)
    &\leq K(\tau_T,M_{\mathrm{leaf}})
       + \sum_{v\in\mathrm{Int}(T)} |p_v| + O(1) \\
    &= K(\tau_T,M_{\mathrm{leaf}}) + S_T(x) + O(1).
\end{align}
Rearranging proves the result.
\end{proof}

\begin{corollary}[Maximal decompositions can be fully refined]
\label{cor:fully_refined_maximum}
Let $x\in\{0,1\}^n$, with $n\geq 1$. Consider all rooted,
ordered decomposition trees of $x$ whose internal nodes
have at least two children and whose nodes represent
nonempty substrings. There exists a tree maximizing
$S_T(x)$ whose leaves are individual bits.
\end{corollary}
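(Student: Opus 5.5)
Since $x$ has finitely many bits, there are finitely many admissible trees (ordered, internal nodes with at least two children, nonempty substrings), so a maximizer $T^\star$ of $S_T(x)$ exists. The plan is to take any maximizer and refine it until every leaf is a single bit, showing that no refinement step decreases $S_T(x)$. The refined tree is then also a maximizer, and its leaves are individual bits.

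The refinement step works as follows. Let $\ell$ be a leaf of $T^\star$ whose substring $x_\ell$ has length at least two. Replace $\ell$ by an internal node with the same substring $x_\ell$ whose children are the individual bits of $x_\ell$, all attached in a single join. This yields a valid tree $T'$: the new internal node has $|x_\ell|\ge 2$ children, and all of them are nonempty. Every internal node $v$ of $T^\star$ keeps the same children's substrings, hence the same child models. So $C_{T',x}(v)=C_{T^\star,x}(v)$ for every old internal node, because local compositionality depends only on $m_{x_v}$ and the child models. The only change is one new summand,
\begin{align*}
S_{T'}(x)=S_{T^\star}(x)+K\bigl(m_{x_\ell}\mid m_{x_{\ell,1}},\dots,m_{x_{\ell,|x_\ell|}}\bigr)\;\geq\; S_{T^\star}(x),
\end{align*}
since Kolmogorov complexity is nonnegative. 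Because $T^\star$ was maximal, equality must hold, so $T'$ is also a maximizer. Repeating this for each non-singleton leaf, a process that terminates because there are finitely many leaves, gives a maximizing tree with single-bit leaves.

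The main obstacle is the requirement that Occam's razor models be well-defined. \Cref{def:compositionality_profile} needs a unique $m_{x_v}$ at every node, and the new single-bit nodes may not have one a priori. I would handle this with the standing assumption, used in the subsequent corollary, that each individual bit has a well-defined (common, empty) model. Alternatively, I would restrict the class of admissible trees to those whose nodes all have well-defined models and note that single-bit substrings are included. A second subtlety is the paper's convention of suppressing $O(1)$ terms. I would stress that the argument uses only exact nonnegativity of $K(\cdot\mid\cdot)$ and the exact locality of each summand, so no additive constants enter. Finally, I would note that if a leaf has length one already, nothing changes, so the construction is well-defined for $n=1$, where the single-node tree is trivially fully refined.
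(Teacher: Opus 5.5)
Your proof is correct and follows essentially the same route as the paper: a maximizer exists by finiteness, leaf refinement leaves every existing local score unchanged by locality, and the added conditional complexities are nonnegative, so the refined tree stays maximal. The only cosmetic difference is that you refine each non-singleton leaf with a single flat join into its bits, whereas the paper substitutes an arbitrary decomposition $R$ of the leaf (giving $S_{T'}(x)=S_T(x)+S_R(x_\ell)$) and then splits recursively; both work equally well.
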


\begin{proof}
Let $\ell$ be a leaf of a decomposition tree $T$, and
let $R$ be a decomposition tree of $x_\ell$. Replacing
$\ell$ by $R$ produces a refinement $T'$ satisfying
\begin{align}
    S_{T'}(x) = S_T(x) + S_R(x_\ell) \geq S_T(x).
\end{align}
Indeed, every existing internal node retains the same
substring and immediate child substrings, so its score
is unchanged, while all newly introduced scores are
nonnegative.

There are finitely many admissible decomposition trees,
so a maximizer exists. Starting from any maximizer,
recursively split each leaf of length greater than one.
The resulting tree has individual bits as leaves and
cannot have a smaller score. Since the original tree
was maximal, the refined tree is also maximal.
\end{proof}

\begin{corollary}[Maximal compositionality bounds model complexity]
\label{cor:maximal_compositionality_bound}
Suppose that the Occam's razor models of both one-bit
strings coincide with a fixed empty model:
\begin{align}
    m_0 = m_1 = m_{\varnothing}.
\end{align}
For every nonempty binary string $x$ of length $n$,
any maximizing decomposition $T_x^{\mathrm{max}}$ satisfies
\begin{align}
    S_{T_x^{\mathrm{max}}}(x)
    \geq K(m_x) - K(n) - O(1).
\end{align}
Consequently, suppressing an additive
$O(\log(n+2))$ term gives
$S_{T_x^{\mathrm{max}}}(x)\geq K(m_x)$.
\end{corollary}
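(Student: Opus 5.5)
The plan is to combine \cref{cor:fully_refined_maximum} with \cref{thm:total_compositionality}. First, by \cref{cor:fully_refined_maximum}, some maximizing decomposition $T^\ast$ of $x$ has all leaves equal to individual bits. Since every maximizing decomposition attains the same value, $S_{T_x^{\mathrm{max}}}(x)=S_{T^\ast}(x)$. It therefore suffices to lower-bound $S_{T^\ast}(x)$.

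Next, apply \cref{thm:total_compositionality} to $T^\ast$, which gives $S_{T^\ast}(x)\geq K(m_x)-K(\tau_{T^\ast},M_{\mathrm{leaf}})-O(1)$. We will not bound $K(\tau_{T^\ast},M_{\mathrm{leaf}})$ directly, since an arbitrary ordered tree shape on $n$ leaves can cost up to order $n$ bits. Instead we compare with a second, canonical tree. The key step is to take $T_0$, the single-join tree whose root has the $n$ one-bit leaves as its children. This is admissible when $n\geq 2$; the case $n=1$ is handled separately below. By maximality, $S_{T^\ast}(x)\geq S_{T_0}(x)$, so it is enough to bound $S_{T_0}(x)$ from below.

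For $T_0$, the shape $\tau_{T_0}$ is determined by $n$ alone. By hypothesis, every leaf model equals $m_\varnothing$, so $M_{\mathrm{leaf}}$ is the list of $n$ copies of a fixed object. Hence $K(\tau_{T_0},M_{\mathrm{leaf}})\leq K(n)+O(1)$. Applying \cref{thm:total_compositionality} then gives $S_{T_0}(x)\geq K(m_x)-K(n)-O(1)$, and therefore the same bound for $S_{T_x^{\mathrm{max}}}(x)$. In fact this comparison makes the refinement step unnecessary: maximality over all admissible trees already gives $S_{T_x^{\mathrm{max}}}\geq S_{T_0}$. We will mention \cref{cor:fully_refined_maximum} only to confirm that a maximizer exists.

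For $n=1$, the only tree is a single leaf, so $S=0$. Here $m_x=m_\varnothing$ has $K(m_x)=O(1)$, and the inequality holds trivially. The final consequence follows from $K(n)=O(\log(n+2))$.

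I expect the main obstacle to be controlling $K(\tau_T,M_{\mathrm{leaf}})$ for the maximizing tree itself, because its shape could be complex. Comparing against the trivial single-join tree $T_0$ is meant to sidestep this entirely. The remaining care is checking that $T_0$ satisfies the admissibility conditions of \cref{cor:fully_refined_maximum}: it must have at least two children and nonempty substrings, which holds for $n\geq 2$.
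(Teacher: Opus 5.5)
Your proof is correct and follows the paper's argument: exhibit a comparison tree whose ordered shape and leaf-model tuple are computable from $n$, apply \cref{thm:total_compositionality} to it, and then invoke maximality. The only difference is the comparison tree. The paper uses the balanced binary tree $B_n$, whereas your single-join tree $T_0$ is the paper's $T_\star$ from \cref{cor:minimal_complete}. $B_n$ remains admissible even if one restricts to binary decompositions, while $T_0$ relies on multi-way joins being allowed, which they are in the paper's tree class.
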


\begin{proof}
Let $B_n$ be the binary decomposition obtained by
recursively splitting each substring into two parts
whose lengths differ by at most one, stopping at
individual bits. Its ordered tree shape is computable
from $n$, and its leaf-model tuple consists of $n$
copies of the fixed model $m_{\varnothing}$. Hence
\begin{align}
    K(\tau_{B_n},M_{\mathrm{leaf}})
    \leq K(n)+O(1).
\end{align}
Applying \cref{thm:total_compositionality} gives
\begin{align}
    S_{B_n}(x) \geq K(m_x)-K(n)-O(1).
\end{align}
Finally, maximality implies
$S_{T_x^{\mathrm{max}}}(x)\geq S_{B_n}(x)$,
which proves the claim.
\end{proof}

\begin{corollary}[Minimal complete decompositions]
\label{cor:minimal_complete}
Let $x\in\{0,1\}^n$, with $n\geq 2$, and suppose
$m_0=m_1=m_{\varnothing}$ for a fixed empty model.
Let $\mathcal{T}_{\mathrm{complete}}(x)$ consist of all
rooted, ordered decomposition trees with single-bit
leaves and at least two children per internal node.

The single-join tree $T_\star$, whose root has all $n$
bits as its immediate children, satisfies
\begin{align}
    S_{T_\star}(x)=K(m_x\mid n)+O(1).
\end{align}
Moreover,
\begin{align}
    K(m_x)-O(n)
    \leq \min_{T\in\mathcal{T}_{\mathrm{complete}}(x)}
             S_T(x)
    \leq K(m_x\mid n)+O(1).
\end{align}
Under the convention of suppressing leaf-description
and accumulated coding terms, these bounds give
\begin{align}
    \min_{T\in\mathcal{T}_{\mathrm{complete}}(x)}
        S_T(x)=K(m_x),
\end{align}
with the single-join tree providing a trivial minimizer
in this idealized accounting.
\end{corollary}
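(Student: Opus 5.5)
We first compute the score of the single-join tree $T_\star$, then prove the upper and lower bounds on the minimum. By \cref{def:compositionality_profile}, $T_\star$ has exactly one internal node, the root, with children $x_1,\dots,x_n$. Its score is
\begin{align}
    S_{T_\star}(x)=K(m_x\mid m_{x_1},\dots,m_{x_n}) .
\end{align}
By hypothesis every $m_{x_i}$ equals $m_{\varnothing}$, so the conditioning tuple is $n$ copies of a fixed object. Such a tuple and $n$ are computable from one another by a fixed program. Conditioning on either therefore changes conditional complexity by only $O(1)$, which gives $S_{T_\star}(x)=K(m_x\mid n)+O(1)$.

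The upper bound follows immediately. Since $T_\star\in\mathcal{T}_{\mathrm{complete}}(x)$ (it has single-bit leaves, and its root has $n\geq 2$ children), we get
\begin{align}
    \min_{T}S_T(x)\leq S_{T_\star}(x)=K(m_x\mid n)+O(1).
\end{align}

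For the lower bound we invoke \cref{thm:total_compositionality} for each $T\in\mathcal{T}_{\mathrm{complete}}(x)$. It gives
\begin{align}
    S_T(x)\geq K(m_x)-K(\tau_T,M_{\mathrm{leaf}})-O(1).
\end{align}
The leaf models are $n$ copies of $m_{\varnothing}$, so it suffices to bound $K(\tau_T)$ uniformly by $O(n)$, with $n$ recoverable from $\tau_T$.

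An ordered rooted tree with $n$ leaves and every internal node of arity at least two has fewer than $n$ internal nodes. It therefore has fewer than $2n$ nodes in total. Such a tree is described by its balanced-parenthesis (depth-first) encoding, a string of fewer than $4n$ bits. Making this encoding self-delimiting costs an additional $O(\log n)\subseteq O(n)$ bits, for example by prefixing its length. The leaf count $n$ is then read off from the tree, and the leaf tuple is reconstructed by a fixed program, so $K(\tau_T,M_{\mathrm{leaf}})\leq O(n)$ uniformly in $T$. Minimizing over $T$ gives $K(m_x)-O(n)\leq\min_T S_T(x)$.

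The main obstacle is the idealized conclusion $\min_T S_T(x)=K(m_x)$, since the gap between $K(m_x)-O(n)$ and $K(m_x\mid n)$ is not literally negligible. We handle it by stating the accounting explicitly. First, $K(m_x\mid n)\leq K(m_x)+O(1)$ always holds, and $K(m_x)\leq K(m_x\mid n)+K(n)+O(1)$ with $K(n)=O(\log n)$, so the upper bound equals $K(m_x)$ up to a suppressed logarithmic term. Second, the $O(n)$ in the lower bound comes entirely from describing the tree shape, which is a leaf-description and accumulated coding term. The corollary's convention explicitly suppresses exactly these terms, so after suppression both bounds collapse to $K(m_x)$. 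The single-join tree attains the upper bound and is therefore a minimizer in this idealized accounting. We will flag in the write-up that, without this convention, only the two-sided bound with its $O(n)$ slack is established rigorously.
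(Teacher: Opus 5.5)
Your proposal is correct and follows the paper's proof essentially step for step. You compute the single-join score from the fact that $n$ and the tuple of $n$ copies of $m_{\varnothing}$ determine each other, take the upper bound from $T_\star\in\mathcal{T}_{\mathrm{complete}}(x)$, and get the lower bound from \cref{thm:total_compositionality} using an $O(n)$-bit tree-shape encoding that also fixes the leaf tuple; your explicit balanced-parenthesis encoding and your note that $K(m_x)\leq K(m_x\mid n)+K(n)+O(1)$ just spell out what the paper leaves to its suppression convention.
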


\begin{proof}
The only internal node of $T_\star$ is its root, so
\begin{align}
    S_{T_\star}(x)
    =K\bigl(m_x\mid
      \underbrace{m_{\varnothing},\ldots,m_{\varnothing}}
                  _{n\text{ copies}}\bigr)
    =K(m_x\mid n)+O(1).
\end{align}
The final equality holds because the tuple determines
$n$, and conversely $n$ determines the tuple using the
fixed model $m_{\varnothing}$. This proves the upper bound.

For the lower bound, every complete tree has $n$ leaves
and at most $n-1$ internal nodes. Its ordered shape can
therefore be encoded in $O(n)$ bits. Since all leaf
models equal $m_{\varnothing}$, the shape also determines
their tuple, giving
\begin{align}
    K(\tau_T,M_{\mathrm{leaf}})=O(n).
\end{align}
By \cref{thm:total_compositionality}, uniformly over
complete trees,
\begin{align}
    S_T(x)\geq K(m_x)-O(n).
\end{align}
Taking the minimum proves the stated bounds. Suppressing
the indicated coding terms yields the idealized equality.
\end{proof}

\begin{corollary}[Minimal unrestricted decompositions]
\label{cor:minimal_unrestricted}
If decomposition trees may terminate at arbitrary
nonempty substrings, then for every nonempty string $x$,
\begin{align}
    \min_T S_T(x)=0.
\end{align}
A minimizer is the tree consisting only of the root,
labelled by $x$.
\end{corollary}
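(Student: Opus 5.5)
The plan is to show two things: every admissible tree has nonnegative score, and the root-only tree has score exactly zero. Nonnegativity is immediate. Each local quantity $C_{T,x}(v)=K(m_{x_v}\mid m_{x_{v_1}},\dots,m_{x_{v_{k_v}}})$ is a conditional prefix Kolmogorov complexity, and such a quantity is the length of a program, so it is at least zero. Summing over $\mathrm{Int}(T)$ then gives $S_T(x)\geq 0$ for every decomposition tree $T$, whatever its shape or leaves.

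For the upper bound I take the tree $T_0$ with a single node, the root, labelled by $x$ itself. This is admissible once decomposition trees are allowed to terminate at arbitrary nonempty substrings, and $x$ is nonempty by hypothesis. The root of $T_0$ has no children, so it is a leaf, and $\mathrm{Int}(T_0)=\varnothing$. The sum defining $S_{T_0}(x)$ is therefore empty, giving $S_{T_0}(x)=0$. Combining this with the lower bound gives $\min_T S_T(x)=0$, attained by $T_0$.

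The only step needing care is convention rather than estimation. I have to confirm that the one-node tree really counts as a rooted decomposition tree under \cref{def:compositionality_profile}, which it does because the child condition there constrains only internal nodes. I also have to confirm that no score is charged at a leaf, which holds because the profile in \cref{eq:compositionality_profile} is indexed only by $\mathrm{Int}(T)$. No suppressed additive terms arise here, since the empty sum is exactly zero and the nonnegativity bound is exact.

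It is worth contrasting this with \cref{cor:minimal_complete}. There, forcing single-bit leaves makes \cref{thm:total_compositionality} bite, so every complete tree must pay roughly $K(m_x)$. Here the single leaf already carries $m_x$, so the term $K(\tau_T,M_{\mathrm{leaf}})$ in \cref{thm:total_compositionality} absorbs all of $K(m_x)$ and the bound becomes vacuous.
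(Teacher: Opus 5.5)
Your proof is correct and follows the paper's argument: the root-only tree has no internal nodes, so its score is an empty sum equal to zero, and every score is a sum of nonnegative conditional complexities. Your closing remark is also accurate, though the proof does not need it: in \cref{thm:total_compositionality} the single leaf model is $m_x$ itself, so the bound becomes vacuous.
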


\begin{proof}
The root-only tree has no internal nodes, so its score
is an empty sum and equals zero. Every decomposition
score is a sum of nonnegative conditional complexities
and is therefore nonnegative.
\end{proof}

\subsection{Recursive consistency of maximal decompositions}
\label{sec:proofs_recursive_consistency}

\begin{theorem}[Recursive consistency of maximal decompositions]
\label{thm:recursive_consistency}
Fix the universal machine, the tuple encoding, and a uniquely selected Occam's razor model $m_y$ for every nonempty substring $y$ of $x$. Let $\mathcal{T}(y)$ be the set of finite rooted, ordered decomposition trees of $y$ whose nodes represent nonempty substrings and whose internal nodes have at least two children. Define
\begin{align}
    S^{\mathrm{max}}(y)
    := \max_{R\in\mathcal{T}(y)} S_R(y).
\end{align}
If $T\in\mathcal{T}(x)$ maximizes $S_T(x)$, then, for every node $v$ of $T$, the subtree $T_v$ rooted at $v$ satisfies
\begin{align}
    S_{T_v}(x_v) = S^{\mathrm{max}}(x_v).
\end{align}
Moreover, replacing $T_v$ by any maximizing decomposition of $x_v$ produces another maximizing decomposition of $x$.
\end{theorem}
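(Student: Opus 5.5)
The plan is a standard exchange (cut-and-paste) argument, the same optimal-substructure reasoning used for dynamic programming over trees. The key structural fact is additivity of the score. For any node $v$ of $T$, split the internal nodes of $T$ into those lying in the subtree $T_v$ and those outside it. Each local score $C_{T,x}(u)$ depends only on the substring $x_u$ and the substrings of its immediate children, through the fixed, uniquely selected Occam's razor models of those substrings. It does not depend on anything else in the tree. Hence
\begin{align}
    S_T(x) = S_{T_v}(x_v) + \sum_{u\in\mathrm{Int}(T)\setminus\mathrm{Int}(T_v)} C_{T,x}(u),
\end{align}
and the second sum is unchanged if $T_v$ is replaced by any other decomposition tree of $x_v$. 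This holds because the node $v$ itself remains a node labelled by $x_v$. If $v$ is internal in $T$, its own score belongs to $T_v$. If $v$ is a leaf of $T$, then $T_v$ has no internal nodes, so the first term vanishes. In that case the replacement is a grafting operation, as in the proof of \cref{cor:fully_refined_maximum}.

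First, I will check that $S^{\mathrm{max}}(y)$ is well defined. $\mathcal{T}(y)$ is finite because every internal node has at least two children and all nodes are nonempty substrings of a finite string. Its maximum is therefore attained.

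Second, given any $R\in\mathcal{T}(x_v)$, I will form $T'$ by replacing $T_v$ with $R$. I need to verify that $T'\in\mathcal{T}(x)$. Nodes remain nonempty substrings, internal nodes keep at least two children, and the concatenation condition holds at $v$'s parent because $R$'s root is still labelled $x_v$.

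By additivity, $S_{T'}(x)-S_T(x)=S_R(x_v)-S_{T_v}(x_v)$. Maximality of $T$ gives $S_{T'}(x)\le S_T(x)$, and hence $S_R(x_v)\le S_{T_v}(x_v)$ for every $R$. Therefore $S_{T_v}(x_v)=S^{\mathrm{max}}(x_v)$. For the final claim, take $R$ to be a maximizer. Then $S_R(x_v)=S_{T_v}(x_v)$, so $S_{T'}(x)=S_T(x)=S^{\mathrm{max}}(x)$, and $T'$ is again a maximizer.

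The main obstacle is justifying the additivity rigorously. The delicate point is that $C_{T,x}(u)$ must be a function of the labelled local configuration alone. This needs the fixed machine, the fixed tuple encoding, and the uniquely selected model $m_y$ for each substring, all of which are assumed in the hypotheses. It also needs a check that the ordered children of $v$'s ancestors are unchanged by the surgery. A secondary subtlety is position versus content: identical substrings at different positions receive the same model, but this causes no difficulty because replacement happens in place. Note that no suppressed coding terms enter the argument, so the result holds exactly, not just up to $O(1)$.
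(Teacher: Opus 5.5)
Your proposal is correct and is essentially the paper's proof: the same exchange argument that replaces $T_v$ by an arbitrary $R\in\mathcal{T}(x_v)$, derives $S_{T'}(x)=S_T(x)-S_{T_v}(x_v)+S_R(x_v)$ from the fact that each local score depends only on a node's substring and its children's substrings, and uses maximality of $T$ (with the equality case for a maximizing $R$) to obtain both claims. Your additional remarks on the finiteness of $\mathcal{T}(y)$, the leaf case, and the absence of suppressed coding terms are accurate and consistent with the paper's exact argument.
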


\begin{proof}
Each set $\mathcal{T}(y)$ is finite and nonempty, so its maximum exists. Fix a node $v$ of $T$ and any $R\in\mathcal{T}(x_v)$. Replace $T_v$ by $R$, identifying the root of $R$ with $v$, and denote the resulting tree by $T'$. This is an admissible decomposition of $x$.

Every internal node outside the replaced subtree retains the same substring and the same ordered list of immediate child substrings. In particular, the parent of $v$, if present, still has $x_v$ as the same child substring. Since each model depends only on its substring, all local scores outside the replaced subtree remain unchanged. Consequently,
\begin{align}
    S_{T'}(x)
    = S_T(x) - S_{T_v}(x_v) + S_R(x_v).
\end{align}
Maximality of $T$ implies $S_{T'}(x)\leq S_T(x)$, and hence $S_R(x_v)\leq S_{T_v}(x_v)$. Since $R$ was arbitrary, $T_v$ is a maximizing decomposition of $x_v$. If $R$ is also maximizing, the displayed identity gives $S_{T'}(x)=S_T(x)$, proving the final claim.
\end{proof}

\subsection{Binary refinement of maximal decompositions}
\label{sec:proofs_binary_refinement}

\begin{theorem}[Binary refinement of maximal decompositions]
\label{thm:binary_refinement}
Fix the universal prefix machine, the tuple encoding, and a uniquely selected Occam's razor model $m_y$ for every nonempty substring $y$ of a nonempty string $x$. Let $\mathcal{T}(x)$ consist of the finite rooted, ordered decomposition trees of $x$ whose nodes represent nonempty substrings and whose internal nodes have at least two children. Write
\begin{align}
    S_T(x) &:= \sum_{v\in\mathrm{Int}(T)} K(m_{x_v}\mid m_{x_{v_1}},\ldots,m_{x_{v_{k_v}}}), \\
    N_{>2}(T) &:= \bigl|\{v\in\mathrm{Int}(T):k_v>2\}\bigr|.
\end{align}
There is a constant $c\geq 0$, independent of $x$ and $T$, such that every $T\in\mathcal{T}(x)$ admits a binary refinement $B$ with the same ordered leaves and
\begin{align}
    S_B(x) \geq S_T(x)-cN_{>2}(T).
    \label{eq:binary_refinement_bound}
\end{align}
Consequently, under the convention of suppressing these accumulated additive coding terms, a maximizing decomposition can be chosen to consist entirely of binary joins.
\end{theorem}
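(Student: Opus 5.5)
The plan is to refine each non-binary internal node of $T$ separately and locally, and then add up the local losses. Take an internal node $v$ with ordered children $v_1,\dots,v_k$, where $k>2$. Replace the single $k$-ary join at $v$ by a left-comb chain of binary joins. Introduce new intermediate nodes $u_2,\dots,u_{k-1}$, where $u_j$ represents the concatenation $[x_{v_1},\dots,x_{v_j}]$. Set $u_1:=v_1$ and identify $u_k$ with $v$. Then $u_j$ has the two children $u_{j-1}$ and $v_j$.

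Every other internal node keeps its substring and its ordered list of child substrings, exactly as in the proof of \cref{thm:recursive_consistency}. Its local score is therefore unchanged. Processing all non-binary nodes this way yields a binary tree $B$ with the same ordered leaves. It then suffices to show a per-node inequality
\begin{align*}
    \sum_{j=2}^{k} K(m_{x_{u_j}}\mid m_{x_{u_{j-1}}},m_{x_{v_j}})
    \;\geq\; K(m_{x_v}\mid m_{x_{v_1}},\ldots,m_{x_{v_k}}) - c ,
\end{align*}
with $c$ independent of $k$, $x$ and $T$. Summing over the $N_{>2}(T)$ such nodes gives \cref{eq:binary_refinement_bound}.

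I would prove the per-node inequality by a chaining argument like the decoder in the proof of \cref{thm:total_compositionality}. Let $p_j$ be a shortest prefix program computing $m_{x_{u_j}}$ from $(m_{x_{u_{j-1}}},m_{x_{v_j}})$. A single fixed decoder receives the tuple $(m_{x_{v_1}},\dots,m_{x_{v_k}})$ as auxiliary input. It reads off $k$ from the self-delimiting tuple encoding, so $k$ costs no bits. It then runs $p_2,\dots,p_k$ in order. At step $j$ it feeds $p_j$ the model it has just reconstructed together with the $j$-th component of the input tuple, and it outputs the final result $m_{x_v}$.

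Because the $p_j$ are prefix-free and their number $k-1$ is known from the input, their concatenation needs no delimiters. Hence $K(m_{x_v}\mid m_{x_{v_1}},\dots,m_{x_{v_k}})\le \sum_j |p_j| + c$, where $c$ is the description length of this fixed decoder.

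The main obstacle is making sure the constant really is uniform: one $c$ per non-binary node, not growing with $k$ or with the lengths involved. Two things have to hold for this. First, the decoder must be a single program, independent of $v$, so its length is paid once per node. Second, no per-step overhead may be incurred, for example for extracting the $j$-th tuple component or for delimiting the programs. The self-delimiting tuple encoding and the prefix-freeness of the $p_j$ are what rule out such overhead, and I would check these two points carefully.

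The final claim of the theorem then follows. Apply the refinement to a maximizer $T$. This gives $S_B(x)\ge S_T(x)-cN_{>2}(T)$, so $B$ is a binary maximizer up to the suppressed accumulated terms, following the convention of the paper's note on suppression of lower-order terms.
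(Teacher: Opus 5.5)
Your proposal is correct and matches the paper's proof essentially step for step: the same left-associated chain of prefix concatenations, the same single fixed decoder running the concatenated shortest prefix programs $p_2\cdots p_k$ on the supplied tuple (with $k$ read from the tuple and prefix-freeness removing delimiters, so $c$ does not depend on $k$), and the same summation over the $N_{>2}(T)$ nonbinary nodes, using that all other local scores are unchanged. Your concluding step for maximizers under the suppressed-term convention is also the paper's argument.
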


\begin{proof}
First consider a single join $y=[y_1,\ldots,y_k]$, where $k\geq 3$, and set $z_j=[y_1,\ldots,y_j]$ for $1\leq j\leq k$. Its left-associated binary refinement joins $z_{j-1}$ with $y_j$ to form $z_j$, successively for $j=2,\ldots,k$. We claim that
\begin{align}
    K(m_y\mid m_{y_1},\ldots,m_{y_k})
    \leq \sum_{j=2}^{k} K(m_{z_j}\mid m_{z_{j-1}},m_{y_j})+c.
    \label{eq:binary_join_bound}
\end{align}
For each $j$, choose a shortest prefix program $p_j$ that outputs $m_{z_j}$ given $(m_{z_{j-1}},m_{y_j})$. Given the tuple $(m_{y_1},\ldots,m_{y_k})$, a fixed decoder reads the concatenated programs $p_2\cdots p_k$. It starts with $m_{z_1}=m_{y_1}$ and recursively runs $p_j$ with the previously recovered model $m_{z_{j-1}}$ and the supplied model $m_{y_j}$. The final output is $m_{z_k}=m_y$. The input tuple determines $k$, and prefix-freeness permits the programs to be parsed successively without additional delimiters. Thus their total length, plus a fixed simulation constant $c$, bounds the conditional complexity of $m_y$, proving \eqref{eq:binary_join_bound}. In particular, $c$ does not depend on $k$. This construction does not require computing an Occam's razor model from its substring: each intermediate model is supplied by its corresponding program.

Now recursively refine the original tree. At each node with more than two children, replace its join by the left-associated binary chain just described, preserving the original child subtrees. The node still represents the same substring, so its parent's score is unchanged; scores within the original child subtrees are also unchanged. \Eqref{eq:binary_join_bound} shows that the replacement decreases the total score by at most $c$. Nodes already having two children need no replacement. After replacing all $N_{>2}(T)$ nonbinary joins, the resulting tree $B$ is binary and has the same ordered leaves. Summing the local bounds proves \eqref{eq:binary_refinement_bound}.

There are finitely many admissible trees, so a maximizer exists. In the idealized accounting that suppresses the indicated coding terms, refining a maximizer cannot decrease its score, and maximality prevents an increase beyond the maximum. Its binary refinement is therefore also a maximizer in that accounting.
\end{proof}

\section{Experiment details}
\label{sec:experiment_details}

\subsection{Compositional strings}
\label{sec:experiment_details_strings}

\paragraph{Grammar representation and codelength.}

For a substring $x_v$, the two-part compressor returns a grammar $G=(A_i\to\alpha_i)_{i=1}^M$ and a residual sequence $z$. Each right-hand side $\alpha_i$ is a sequence of original symbols and rule names $A_j$ with $j<i$, so the rules form an acyclic hierarchy. Recursively expanding every rule name in $z$ recovers $x_v$ exactly. The empirical model $m_{x_v}^f$ is the grammar $G$ alone; $z$ is the data-given-model component and is consequently not passed to the conditional model compressor.

We use a shared zero-order entropy code for the grammar and residual. Let $\mathcal{S}(G,z)$ be the pooled stream containing every symbol on the right-hand side of every rule, one occurrence of each rule name on its left-hand side, and every symbol in $z$. If $n_s$ is the number of occurrences of symbol $s$ in this stream and $N=\sum_s n_s$, its codelength is $\lambda(s)=-\log_2(n_s/N)$. We separate the resulting description into:
\begin{align}
    L_{\mathrm{model}}^f(G)
    &= \sum_{i=1}^M\left[\lambda(A_i)+\sum_{s\in\alpha_i}\lambda(s)\right], \\
    L_{\mathrm{data}}^f(x_v\mid G)
    &= \sum_{s\in z}\lambda(s), \\
    L^f_G(x_v) &= L_{\mathrm{model}}^f(G)+L_{\mathrm{data}}^f(x_v\mid G).
\end{align}
Including one left-hand-side occurrence charges for introducing each rule, while occurrences of its name in other rules or in $z$ capture the savings from reuse. Because the empirical symbol probabilities are estimated from the pooled stream, adding a rule changes both the sequence length and all symbol codelengths; rule induction must therefore evaluate the change in the complete two-part code rather than count replaced symbols alone. We treat rule boundaries and the empirical distribution as part of the agreed coding convention and do not encode their additional overhead. Thus, as in our broader empirical framework, these quantities should be interpreted as consistently applied surrogate codelengths rather than calibrated lengths of a particular binary file format.

\paragraph{MDL grammar induction.}

Standard RePair maintains the current residual sequence $z$, finds its most frequent adjacent pair of symbols, introduces a new rule for that pair, and replaces the pair's non-overlapping occurrences by the new rule name \citep{larsson2000off}. It repeats this process until no pair occurs twice. For example, replacing $01$ by $A$ changes $01010$ to $AA0$ and adds the rule $A\to 01$. Since the symbols of $z$ can include previously introduced rule names, later replacements can build a hierarchy of rules. Two features of standard RePair are undesirable for our purpose. First, it discovers a long repeated substring through several pair replacements, even when one rule for the entire substring would be shorter and easier to interpret. MR-RePair addresses this problem by starting from a repeated pair and extending all of its occurrences to the left and right for as long as their surrounding symbols agree, then introducing a rule for the resulting longer repeat \citep{furuya2018mr}. Second, RePair and MR-RePair ordinarily add a rule whenever a suitable repetition exists. They do not ask whether the saved occurrences outweigh the cost of storing the rule, and can therefore assign substantial grammars even to random strings for which the empty grammar gives a shorter two-part code.

Our fitter retains MR-RePair's long-repeat search but chooses among its candidates by the resulting change in $L_G^f(x_v)$; \cref{alg:mdl_grammar_induction} gives the complete procedure. At each iteration, it considers every distinct adjacent pair with at least two non-overlapping occurrences in $z$. For one such pair, it collects all of the pair's occurrences and extends them simultaneously: it prepends a symbol only if the same symbol appears immediately before every occurrence, and appends a symbol only if the same symbol appears immediately after every occurrence. This continues until neither extension is possible. The result is maximal relative to this set of occurrences; it need not be the longest repeat anywhere in $z$. As in MR-RePair, if the result has length greater than two and its first and last symbols are equal, we remove its final symbol. This boundary adjustment prevents shifted, self-overlapping copies from inducing inconsistent replacements. We then greedily retain the leftmost possible non-overlapping occurrences. If at least two remain, the resulting pattern $q$ becomes a candidate right-hand side for a new rule $A\to q$. Its exact gain is
\begin{align}
    \Delta(q) = L_G^f(x_v)-L_{G\cup\{A\to q\}}^f(x_v),
\end{align}
where the second term is evaluated after replacing those occurrences by $A$. The selected candidate is the one with greatest gain, and a rule is added only if this gain is positive. Thus, every accepted step strictly shortens the surrogate two-part code.

More concretely, the pooled entropy length can be written as $N\log_2N-\sum_s n_s\log_2n_s$. If a candidate $q$ has length $m$, contains symbol $s$ exactly $c_s(q)$ times, and is replaced at $k$ non-overlapping occurrences, then introducing its new rule name $A$ gives:
\begin{align}
    N' &= N+(m+1)-k(m-1), \\
    n_s' &= n_s-(k-1)c_s(q), \qquad
    n_A'=k+1.
\end{align}
The $m$ right-hand-side symbols and one left-hand-side occurrence account for the $m+1$ added model symbols, while replacing $k$ copies of an $m$-symbol pattern by $k$ rule names accounts for the residual reduction. Substituting these counts into the entropy expression yields $\Delta(q)$ without rebuilding the candidate grammar.

The pair-extension procedure has an important blind spot. It extends a pair only while \emph{all} occurrences of that pair have the same neighbouring symbols. Suppose, for example, that several copies of a long pattern contain the pair $01$, but $01$ also occurs elsewhere in unrelated contexts. Those extra occurrences can stop the simultaneous extension immediately, even though the subset inside the long pattern would yield a valuable rule. We therefore supplement the local pair search with a global repeat search based on a suffix array. This second search can isolate a subset of starting positions that share a long pattern.

A \emph{substring} is any contiguous segment of a sequence. A \emph{prefix} is a substring that begins at the first position, while the suffix beginning at position $i$ is the substring $z_i,z_{i+1},\ldots$ that runs from $i$ to the end. The suffix array does not store these suffixes themselves: it stores their starting positions, ordered so that the corresponding suffixes are in lexicographic, or dictionary, order. For example, the suffixes of $z=\mathtt{cabdab}$, shown in sorted order, are
\begin{align}
    \mathtt{ab}_{(5)},\quad
    \mathtt{abdab}_{(2)},\quad
    \mathtt{b}_{(6)},\quad
    \mathtt{bdab}_{(3)},\quad
    \mathtt{cabdab}_{(1)},\quad
    \mathtt{dab}_{(4)},
\end{align}
where each subscript is the suffix's starting position in $z$. The suffix array is therefore $(5,2,6,3,1,4)$. Its first two suffixes begin with the same length-two prefix $\mathtt{ab}$, revealing occurrences of $\mathtt{ab}$ at positions $2$ and $5$. More generally, suffixes with a long shared prefix become neighbours in this ordering, even if other occurrences of the prefix's shorter symbol pairs appear elsewhere in $z$.

We summarize this shared-prefix information with a longest-common-prefix (LCP) array. For each suffix after the first in sorted order, its LCP value is the number of initial symbols it shares with the preceding suffix. A block of $r$ consecutive suffixes for which the $r-1$ intervening LCP values are all at least $h$ represents $r$ occurrences of the same length-$h$ substring. We enumerate the maximal such blocks at the prefix lengths present in the LCP array; these are the \emph{LCP intervals} in \cref{alg:suffix_array_candidates}. An interval is only a quick proposal, since its occurrences may overlap and replacing overlapping copies is impossible. We rank an interval containing $r$ suffixes with shared-prefix length $h$ by $h(r-1)$, which approximates the number of repeated symbols beyond the first copy. This ranking is used only to decide which proposals receive further evaluation, not to accept a grammar rule.

For each retained interval, we truncate its shared prefix to the maximum permitted rule length and use that prefix as the candidate pattern $q$. We then search the entire residual for every exact occurrence of $q$, rather than using only the starting positions in the interval. Scanning from left to right, we keep an occurrence whenever it begins after the preceding retained occurrence ends; because all occurrences have equal length, this produces a maximum-cardinality non-overlapping set. Patterns with fewer than two retained occurrences are discarded, and duplicate patterns proposed by different intervals are evaluated only once. Every surviving suffix-array pattern is finally scored with the same exact MDL gain $\Delta(q)$ as every pair-derived pattern.

\begin{algorithm}[t]
    \caption{MDL-guided maximal-repeat grammar induction}
    \label{alg:mdl_grammar_induction}
    \KwIn{String $x$; suffix-search interval $\tau$; suffix parameters $\theta_{\mathrm{SA}}=(\ell_{\min},\ell_{\max},K_{\mathrm{SA}},\rho,I_0)$}
    $G\gets\varnothing$, $z\gets x$, and $r\gets0$\;
    \While{\textnormal{true}}{
        $\mathcal{C}\gets$ maximal-repeat candidates obtained from every repeated pair\;
        \If{$K_{\mathrm{SA}}>0$ and either $\mathcal{C}$ has no positive-gain candidate or $[\tau>0$ and $(r+1)\bmod\tau=0]$}{
            $\mathcal{C}\gets\mathcal{C}\cup\operatorname{SuffixCandidates}(z;\theta_{\mathrm{SA}})$ using \cref{alg:suffix_array_candidates}\;
        }
        \If{$\mathcal{C}=\varnothing$}{
            \textbf{break}\;
        }
        $q^\star\gets\arg\max_{q\in\mathcal{C}}\Delta(q)$\;
        \If{$\Delta(q^\star)\leq 0$}{
            \textbf{break}\;
        }
        Add $A\to q^\star$ to $G$ and replace the non-overlapping occurrences of $q^\star$ in $z$ by $A$\;
        $r\gets r+1$\;
    }
    \textbf{return} $G$ and $z$\;
\end{algorithm}

\begin{algorithm}[t]
    \caption{Suffix-array repeat candidates}
    \label{alg:suffix_array_candidates}
    \KwIn{Residual $z$; length bounds $\ell_{\min},\ell_{\max}$; base interval budget $K_{\mathrm{SA}}$; oversampling factor $\rho$; minimum interval pool $I_0$}
    \KwOut{A set of repeated-pattern candidates $\mathcal{C}_{\mathrm{SA}}$}
    $\mathrm{SA}\gets$ starting positions of all suffixes of $z$, sorted lexicographically\;
    $\mathrm{LCP}\gets$ longest-common-prefix lengths of adjacent suffixes in $\mathrm{SA}$\;
    $B\gets\max\{\rho K_{\mathrm{SA}},I_0\}$\;
    Enumerate maximal LCP intervals with common-prefix length $h\geq\ell_{\min}$ and rank each interval containing $r$ suffixes by $h(r-1)$\;
    $\mathcal{I}\gets$ the $B$ highest-ranked intervals\;
    $\mathcal{C}_{\mathrm{SA}}\gets\varnothing$\;
    \ForEach{$I\in\mathcal{I}$, in decreasing score order}{
        $m\gets\min\{h_I,\ell_{\max}\}$ and $q\gets$ the length-$m$ prefix represented by $I$\;
        \If{$q$ has not already been examined}{
            Find every exact occurrence of $q$ in $z$ and greedily retain a left-to-right non-overlapping subset\;
            \If{at least two non-overlapping occurrences remain}{
                Add $q$ and its occurrence spans to $\mathcal{C}_{\mathrm{SA}}$\;
            }
        }
    }
    \textbf{return} $\mathcal{C}_{\mathrm{SA}}$\;
\end{algorithm}

The interval pool in \cref{alg:suffix_array_candidates} is deliberately larger than $K_{\mathrm{SA}}$: different high-ranked intervals can collapse to the same pattern after truncation, and other patterns fail the non-overlap test. Accordingly, $K_{\mathrm{SA}}$ sets the scale of the search rather than imposing a hard cap on its output. The routine can return as many as $B$ distinct patterns when all retained intervals survive, and every returned pattern is evaluated exactly before a rule is accepted.

Constructing and sorting the suffix array, computing the LCP values, and matching the retained patterns against all of $z$ is more expensive than updating the local pair index. We therefore invoke the global search only on iterations selected by $\tau$, or whenever the pair-derived candidates have no positive gain. The scheduled calls allow global candidates to compete before the local search stalls, while the fallback call prevents the pair-extension blind spot from terminating induction prematurely.

For the reported experiment, we set $\tau=4$, $\ell_{\min}=2$, $\ell_{\max}=128$, $K_{\mathrm{SA}}=24$, $\rho=8$, and $I_0=16$. Ties between equal-gain candidates are resolved by the earliest occurrence in the residual.

\paragraph{Conditional model compression.}

At a binary join $x_v=[x_a,x_b]$, the conditional compressor receives the grammar $G$ fitted to the combined substring and the two grammars $G_a,G_b$ fitted independently to its parts. It does not receive their residual sequences: the target quantity is the cost of describing the model $m_{x_v}$ given the part models, not the cost of reconstructing $x_v$. Operationally, the compressor tries to describe each rule of $G$ as either a copy of, a modification of, or an addition to the rules already available in $G_a$ and $G_b$. A copied rule should cost zero, a modified rule should cost according to the changed symbols, and a genuinely new rule should be charged in full.

The first problem is determining which rules correspond. Rule names are arbitrary identifiers: for example, the two grammars $(A\to 01,B\to AA)$ and $(C\to 01,D\to CC)$ have the same structure even though their names differ. Directly comparing their serialized symbol sequences would incorrectly charge for this relabelling. We therefore seek a consistent relabelling of each part grammar, meaning that we both reorder its rule definitions and replace every reference to an old rule name by the corresponding new name. Only after this alignment do we compare rule contents.

To handle grammars with different numbers and lengths of rules, we place them in arrays with a common number of rows and columns and fill unused rows and positions with a distinguished padding symbol. When this common number of rule rows is at most $6$, we can try every possible relabelling of a part grammar and retain the one with the smallest symbolic difference from $G$. This exhaustive procedure compares the actual rules and therefore finds the minimum-cost relabelling under the difference measure defined below.

For larger grammars, trying every relabelling is infeasible because the number of permutations grows factorially. We instead give each rule a fixed-length numerical descriptor, or \emph{embedding}, constructed from its terminal symbols, their positions, and the rules that it references. These features depend on the rule's role in the grammar rather than its integer name. Rules with similar descriptors can then be treated as candidate counterparts.

Mechanistically, each rule is represented by a $64$-dimensional vector. Every terminal symbol is assigned a fixed random vector generated with seed $0$ and shared across all three grammars. At initialization, a reference to another rule contributes no content beyond its position, so a rule's descriptor summarizes its terminals and overall layout. We then refine all rule descriptors simultaneously for three rounds. In each round, every rule reference is replaced by the preceding-round descriptor of the referenced rule before the enclosing rule is summarized again. One round therefore incorporates the contents of directly referenced rules, and successive rounds incorporate progressively deeper nested structure. This procedure is analogous to Weisfeiler--Lehman refinement \citep{leman1968reduction}, and a consistent relabelling of rule names leaves the resulting descriptors unchanged.

Because a rule's right-hand side is an ordered sequence, we add a sinusoidal position vector to the vector at each symbol position. Before averaging the position vectors into one descriptor for the rule, we replace the vector $u_k$ at position $k$ by $0.25u_{k-1}+0.5u_k+0.25u_{k+1}$, repeating the endpoint vector when a neighbour lies beyond the rule boundary. This local smoothing makes the descriptor less sensitive to a one-position displacement caused by a small insertion or deletion. We compute the squared Euclidean distance between every rule descriptor in $G$ and every descriptor in a part grammar, then use the Hungarian assignment algorithm to find the one-to-one correspondence with minimum total descriptor distance. This is only a tractable heuristic for choosing a relabelling: the final conditional score is computed from the aligned symbolic rules, not from their embedding distances.

Let $\alpha_j$ be a right-hand side in $G$ and let $\widetilde{\alpha}_{ij}$ be the rule aligned to it from part grammar $G_i$. Once the alignment is fixed, we charge the same number of bits for every terminal or rule reference on a right-hand side:
\begin{align}
    b = \log_2\max\left\{1,M+|\Sigma_G|\right\} ,
\end{align}
where $M$ is the number of rules in $G$ and $\Sigma_G$ is the set of distinct terminal symbols appearing in it. Thus, $M+|\Sigma_G|$ is the number of symbol types that may need to be named in the target grammar. The name on the left-hand side of a rule is implicit in its aligned row, so we charge only for symbols on right-hand sides. To avoid treating a short insertion at the beginning or end as a change to every subsequent position, we compare each pair of rules under several relative shifts:
\begin{align}
    d(\alpha,\beta)
    =
    b\min_{\delta\in\{-S,\ldots,S\}}
    d_{\mathrm{H}}\!\left(\operatorname{pad}_S(\alpha),
    \operatorname{shift}_{\delta}(\operatorname{pad}_S(\beta))\right),
    \qquad S=10,
\end{align}
where padding adds empty positions at both ends, $\operatorname{shift}_{\delta}$ moves the second padded rule by $\delta$ positions, and the Hamming distance $d_{\mathrm H}$ simply counts positions containing different symbols. We select the cheapest shift from $-10$ to $10$ and multiply its mismatch count by $b$. Identical aligned rules therefore cost zero. Comparing a rule with an entirely empty row costs $b|\alpha|$, the same as writing every symbol of $\alpha$ directly. After aligning each part grammar separately, each rule of the combined grammar independently chooses among three descriptions: write it directly, describe its differences from the aligned rule in $G_a$, or describe its differences from the aligned rule in $G_b$:
\begin{align}
    L_{\mathrm{cond}}^g(G\mid G_a,G_b)
    =
    \sum_{j=1}^{M}
    \min\left\{
        b|\alpha_j|,
        d(\alpha_j,\widetilde{\alpha}_{aj}),
        d(\alpha_j,\widetilde{\alpha}_{bj})
    \right\}.
    \label{eq:conditional_grammar_code}
\end{align}

\begin{algorithm}[t]
    \caption{Conditional compression of a combined grammar}
    \label{alg:conditional_grammar_compression}
    \KwIn{Combined grammar $G$; part grammars $G_a,G_b$; exact-alignment threshold $M_{\mathrm{exact}}$}
    Pad all grammars to a common shape and compute $b$\;
    \ForEach{$i\in\{a,b\}$}{
        \eIf{the padded grammars have at most $M_{\mathrm{exact}}$ rules}{
            Find the minimum-difference relabelling of $G_i$ by exhaustive permutation\;
        }{
            Embed the rules, then find a one-to-one alignment using Hungarian assignment\;
        }
    }
    \ForEach{rule $\alpha_j$ in $G$}{
        $c_j\gets\min\{b|\alpha_j|,d(\alpha_j,\widetilde{\alpha}_{aj}),d(\alpha_j,\widetilde{\alpha}_{bj})\}$\;
    }
    \textbf{return} $\min\{b\sum_j|\alpha_j|,\sum_j c_j\}$\;
\end{algorithm}

This design targets three desired behaviours. First, exact alignment assigns zero cost to a grammar conditioned on an isomorphic relabelling of itself. Second, if the combined grammar is a union of rules inherited from the two parts, the per-rule minimum permits each rule to select the part from which it came and therefore permits a zero total cost. Third, small changes to an inherited rule are charged in proportion to their symbolic differences, while a genuinely unmatched rule pays its full raw cost. We use the difference-based strategy described above, with exhaustive alignment threshold $6$, maximum shift $10$, embedding dimension $64$, and three message-passing rounds.

This procedure is intentionally an operational approximation to conditional model complexity, not a complete lossless coding protocol. In particular, the inferred alignment is not transmitted separately, the rule boundaries are treated as known, and the fixed per-symbol cost is only a relative complexity scale. These choices remove arbitrary rule identities and focus the score on structural differences between grammars, which is the property needed by the reward.

\paragraph{Decomposition and reward.}

The balanced decomposition contains $32$, $16$, $8$, $4$, $2$, and $1$ substrings of lengths $4$, $8$, $16$, $32$, $64$, and $128$, respectively. Evaluating one candidate therefore fits $63$ grammars and computes the conditional cost at each of the $31$ binary joins. Conditional costs are divided by the length of the combined substring, averaged over the $16$, $8$, $4$, $2$, and $1$ joins at their respective scales, and aggregated with the $p=0.75$ power mean given in the main text.

\paragraph{Optimization.}

We implemented the evolutionary search with \texttt{pymoo} \citep{pymoo}, using NSGA-II \citep{deb2002fast} with a single objective equal to the negative reward. Although NSGA-II is formulated for multi-objective optimization, in this setting its tournament selection and parent--offspring survival reduce to an elitist genetic search over a scalar score; we refer readers to the original work for details of ranking and survival.

The initial population contains $10{,}000$ strings sampled independently and uniformly from the eight-symbol vocabulary. Each generation produces $5{,}000$ offspring. Selected parent pairs undergo two-point crossover with probability $0.9$. Mutation is then applied to every offspring: each of its $128$ positions is independently selected with probability $2/128$, and a selected position is reset to a uniformly sampled vocabulary symbol, possibly the symbol already present. Thus, each offspring has two attempted symbol resets in expectation. Duplicate strings are eliminated. Parents and offspring are scored, and NSGA-II retains an elite population of $10{,}000$ strings for the next generation.

\paragraph{Additional results.}

\cref{fig:generated_string_dot_all} shows the dot plot of the top-reward string for all $10$ seeds after optimizing both the compositionality and structure rewards. Methods for dot plot generation and its interpretation are described in the main text.

\begin{figure}[ht]
    \centering
    \includegraphics[width=\linewidth]{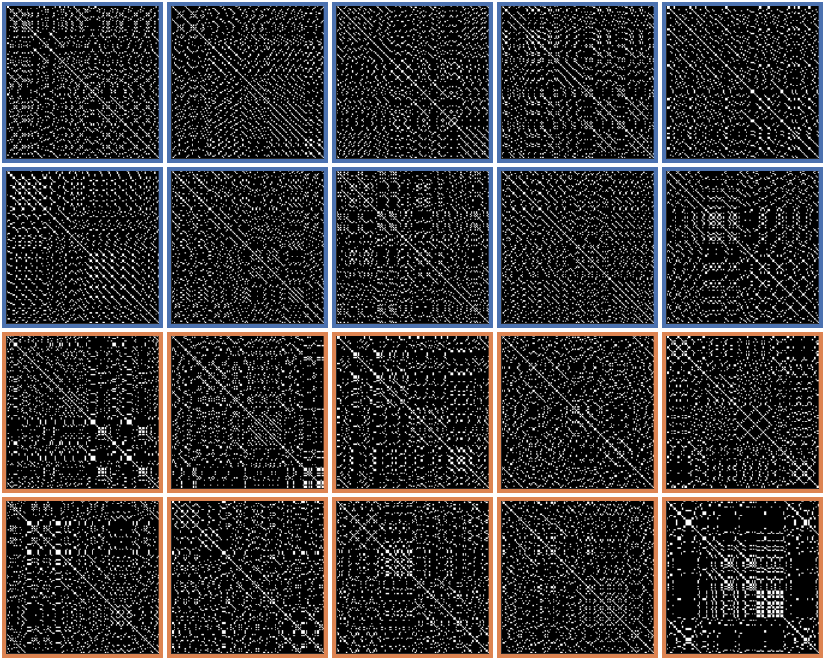}
    \caption{\textbf{Dot plots for all seeds.} Compositionality-optimized strings have \textcolor{MidnightBlue}{\textbf{blue}} borders, and structure-optimized strings have \textcolor{BurntOrange!50!RawSienna}{\textbf{orange}} borders.}
    \label{fig:generated_string_dot_all}
\end{figure}

\subsection{Compositional images}
\label{sec:experiment_details_images}

\paragraph{Coordinate-network model.}

For every patch $x_v\in\{0,\ldots,3\}^{H\times W\times 3}$, the two-part compressor fits a separate probabilistic coordinate network. Let $u_{ij}\in[-1,1]^2$ be the coordinate of pixel $(i,j)$, normalized within the patch so that its two axes span $[-1,1]$. The network outputs $12$ real-valued scores, or logits, arranged as four logits for each of the three channels, and thereby defines a categorical distribution $q_{\theta,c}(\cdot\mid u_{ij})$ for channel $c$. Its likelihood factorizes over coordinates and channels:
\begin{align}
    q_\theta(x_v)
    =\prod_{i=1}^{H}\prod_{j=1}^{W}\prod_{c=1}^{3}
    q_{\theta,c}(x_{v,ijc}\mid u_{ij}).
    \label{eq:cppn_patch_likelihood}
\end{align}
This factorization does not make pixels independent in the model's description: the same parameters determine the distribution at every coordinate, so spatially related pixels can share structure through the network.

The network has three hidden layers. Each layer contains four groups of $32$ units, to which we respectively apply the functions $z$, $\sigma(5z)$, $\tanh(2.5z)$, and $\exp(-5z^2)$. Concatenating the four groups gives a hidden width of $128$. The mixture of monotone, saturating, and localized nonlinearities is standard for CPPNs and gives a small fixed architecture several biases for representing smooth regions, boundaries, symmetries, and repeated spatial patterns.

Alongside this spatial model, we fit a coordinate-independent baseline $q_0$. For each channel $c$, $q_{0,c}$ is its empirical categorical distribution in the training pixels, with an additive pseudocount of $10^{-3}$ for each intensity. This baseline can encode colour frequencies but no relationship between a value and its position. We assign it zero model complexity, making it the empty-model analogue against which learned spatial structure must justify itself.

\paragraph{Training and model selection.}

We randomly divide the spatial coordinates of each patch equally into training and validation sets. The coordinate network is trained only on the training half by minimizing mean categorical cross-entropy over coordinates and channels. We use full-batch AdamW with initial learning rate $4\times10^{-3}$, weight decay $10^{-4}$, gradient-norm clipping at $1$, and cosine annealing to $4\times10^{-4}$. Training runs for at most $250$ gradient steps. We evaluate after the first step and every five steps thereafter, retain the checkpoint with the lowest validation cross-entropy, and stop after at least $10$ steps once eight consecutive evaluations fail to improve validation loss by at least $10^{-4}$ bits per symbol.

Held-out selection serves two roles. First, early stopping restricts how much pixel-level detail the network can memorize. Second, we compare the selected checkpoint with the constant baseline on validation coordinates. We retain the coordinate network only if it reduces validation loss by at least $0.02$ bits per symbol and by at least $2\%$ relative to the baseline; otherwise, the empirical model $m_{x_v}^f$ is the constant model. These margins prevent random fluctuations from turning weak coordinate dependence into apparent structure. Network initialization and the train--validation division are resampled when a patch model is fitted.

\paragraph{Two-part complexity estimates.}

Let $\ell_r$ be the training cross-entropy in bits per pixel-channel at evaluation $r$, and let $s_r$ be the cumulative number of training coordinates processed by that point. If $r^\star$ is the validation-selected checkpoint, write $\ell_\star=\ell_{r^\star}$ and $s_\star=s_{r^\star}$. We estimate the model-description term by the trapezoidal approximation to:
\begin{align}
    \widehat L_{\mathrm{model}}^f(q_\theta)
    =3\int_0^{s_\star}\left[\ell(s)-\ell_\star\right]_+\,\mathrm ds,
    \label{eq:cppn_model_complexity}
\end{align}
where $\ell(s)$ linearly interpolates the recorded training losses and $[a]_+=\max\{a,0\}$. The factor $3$ converts coordinates seen into channel symbols seen. Intuitively, $\ell(s)-\ell_\star$ removes the uncertainty that remains even after the selected model has been learned; the remaining excess is the predictive cost paid because its structure has not yet been absorbed by the weights. Integrating this excess over learning gives a cheap learning-curve proxy for the information acquired by the network. This follows the same principle as online, or prequential, estimates of neural-network description length, which accumulate predictive codelength as training examples become available sequentially \citep{blier2018description}, and the computationally inexpensive learning-complexity proxy used by Epiplexity \citep{finzi2026epiplexity}.

The model-to-data term is the selected model's negative log-likelihood on the complete patch, including both the training and held-out coordinates:
\begin{align}
    \widehat L_{\mathrm{data}}^f(x_v\mid q)
    =-\sum_{i,j,c}\log_2 q_c(x_{v,ijc}\mid u_{ij}).
    \label{eq:cppn_data_complexity}
\end{align}
If the constant baseline is selected, we set $\widehat L_{\mathrm{model}}^f(q_0)=0$ and use its full-patch negative log-likelihood in \cref{eq:cppn_data_complexity}. Otherwise, the two-part length is the sum of \cref{eq:cppn_model_complexity,eq:cppn_data_complexity}.

Unlike a true prequential/online code, \cref{eq:cppn_model_complexity} does not directly estimate the actual codelength of the final weights. We use it as a cheap and consistently applied surrogate expected to increase with the amount of learnable structure stored in the network. The validation split, constant fallback, and residual likelihood are therefore essential complements: they prevent a network from receiving structural credit merely for taking a long time to memorize noise. \cref{alg:cppn_two_part_code} summarizes the complete procedure.

\begin{algorithm}[t]
    \caption{Neural two-part compression of an image patch}
    \label{alg:cppn_two_part_code}
    \KwIn{Patch $x_v$ and normalized coordinates $\{u_{ij}\}$}
    Randomly split the coordinates equally into training set $I_{\mathrm{tr}}$ and validation set $I_{\mathrm{val}}$\;
    Fit the coordinate-independent channel marginals $q_0$ on $I_{\mathrm{tr}}$\;
    Randomly initialize the fixed coordinate network $q_\theta$\;
    \For{$t=1,\ldots,t_\text{max}$}{
        Update $\theta$ on the full training set using AdamW\;
        At scheduled evaluations, record $(s_t,\ell_t)$ and the validation loss, and save the best checkpoint\;
        Stop if the validation patience criterion is met\;
    }
    Restore the validation-selected checkpoint $q_{\theta^\star}$\;
    \eIf{$q_{\theta^\star}$ improves on $q_0$ by both validation margins}{
        $q\gets q_{\theta^\star}$ and compute $\widehat L_{\mathrm{model}}^f(q)$ using \cref{eq:cppn_model_complexity}\;
    }{
        $q\gets q_0$ and $\widehat L_{\mathrm{model}}^f(q)\gets0$\;
    }
    Compute $\widehat L_{\mathrm{data}}^f(x_v\mid q)$ on every coordinate using \cref{eq:cppn_data_complexity}\;
    \textbf{return} $m_{x_v}^f=q$, $\widehat L_{\mathrm{model}}^f(q)$, and $\widehat L_{\mathrm{data}}^f(x_v\mid q)$\;
\end{algorithm}

\paragraph{Conditional model compression.}

At each quadtree join, the target model is the network $q_v$ fitted to the parent patch, while the four given models $q_1,\ldots,q_4$ were fitted independently to its quadrants. Direct parameter comparison would be arbitrary: hidden units can be permuted, rescaled, or otherwise reparameterized without changing the represented function. We instead ask how much function-level adaptation is required to turn the children into the parent.

We first construct an unadapted \emph{routed student}. Every coordinate in the parent is assigned to its quadrant $a(i,j)\in\{1,\ldots,4\}$ and converted to that quadrant's local normalized coordinates $\widetilde u_{ij}$. The student's prediction is:
\begin{align}
    q_S^{(0)}(\cdot\mid u_{ij})
    =q_{a(i,j)}(\cdot\mid\widetilde u_{ij}).
    \label{eq:routed_student}
\end{align}
Thus, $q_S^{(0)}$ represents the null hypothesis that the model of the parent is only a spatial juxtaposition of the four child models. No new routing parameters are learned or charged: the quadrant routing is fixed by the decomposition tree.

The parent acts as a teacher. We cache its categorical distributions at every parent coordinate, copy the four child models, and fine-tune all copied coordinate networks to minimize the forward divergence:
\begin{align}
    d(q_v,q_S;I)
    =\frac{1}{3|I|}\sum_{(i,j)\in I}\sum_{c=1}^{3}
    D_{\mathrm{KL}}\!\left(q_{v,c}(\cdot\mid u_{ij})\,\|\,
    q_{S,c}(\cdot\mid u_{ij})\right)
    \label{eq:cppn_distillation_loss}
\end{align}
in bits per symbol. A child that fell back to the constant model has no trainable parameters and remains fixed; all other child weights are adapted. We again use a random $50\%$ training--validation split, the optimizer and stopping schedule above, and validation checkpointing. The adapted student is retained only if its validation divergence is lower than that of the unadapted routed student by at least $0.02$ bits per symbol and by at least $2\%$. Otherwise, we revert all four components to their original child models.

The conditional model term uses the same learning-curve construction as \cref{eq:cppn_model_complexity}, with the training cross-entropy $\ell$ replaced by the training distillation loss $d$ and its selected value $d_\star$:
\begin{align}
    \widehat L_{\mathrm{model,cond}}^g
    =3\int_0^{s_\star}[d(s)-d_\star]_+\,\mathrm ds.
    \label{eq:cppn_conditional_model_complexity}
\end{align}
If adaptation is rejected, this term is zero. The conditional residual is the remaining teacher--student divergence over every coordinate and channel:
\begin{align}
    \widehat L_{\mathrm{resid,cond}}^g
    =\sum_{i,j,c}D_{\mathrm{KL}}\!\left(q_{v,c}(\cdot\mid u_{ij})\,\|\,
    q_{S,c}(\cdot\mid u_{ij})\right),
    \label{eq:cppn_conditional_residual}
\end{align}
and our estimate of the conditional model complexity is their sum:
\begin{align}
    L_{\mathrm{cond}}^g(m_{x_v}^f\mid m_{x_{v_1}}^f,\ldots,m_{x_{v_4}}^f)
    =\widehat L_{\mathrm{model,cond}}^g+\widehat L_{\mathrm{resid,cond}}^g.
    \label{eq:cppn_conditional_total}
\end{align}
This construction gives the desired limiting behaviour. If the parent model is reproduced by simply routing among the children, no adaptation is needed and the residual divergence is zero. If modeling the quadrants together changes the useful spatial structure, reproducing the parent requires either a nontrivial learning trajectory, a remaining functional correction, or both. As with the two-part model term, \cref{eq:cppn_conditional_model_complexity} is a surrogate rather than a complete bitstream; the purpose of the residual is to ensure that cheap adaptation is not mistaken for exact recovery. The full procedure is given in \cref{alg:cppn_conditional_code}.

\begin{algorithm}[t]
    \caption{Conditional compression of a parent coordinate model}
    \label{alg:cppn_conditional_code}
    \KwIn{Parent model $q_v$; quadrant models $q_1,\ldots,q_4$; parent coordinate grid}
    Cache the teacher distributions produced by $q_v$ on the parent grid\;
    Construct the unadapted routed student $q_S^{(0)}$ using \cref{eq:routed_student}\;
    Randomly split parent coordinates equally into training set $I_{\mathrm{tr}}$ and validation set $I_{\mathrm{val}}$\;
    Copy the child models to obtain $q_S$\;
    \For{$t=1,\ldots,t_\text{max}$}{
        Fine-tune the trainable child copies on $I_{\mathrm{tr}}$ using \cref{eq:cppn_distillation_loss}\;
        At scheduled evaluations, record $(s_t,d_t)$ and validation divergence, and save the best checkpoint\;
        Stop if the validation patience criterion is met\;
    }
    Restore the validation-selected student $q_S^\star$\;
    \eIf{$q_S^\star$ improves on $q_S^{(0)}$ by both validation margins}{
        $q_S\gets q_S^\star$ and compute $\widehat L_{\mathrm{model,cond}}^g$ using \cref{eq:cppn_conditional_model_complexity}\;
    }{
        $q_S\gets q_S^{(0)}$ and $\widehat L_{\mathrm{model,cond}}^g\gets0$\;
    }
    Compute $\widehat L_{\mathrm{resid,cond}}^g$ on the complete parent grid using \cref{eq:cppn_conditional_residual}\;
    \textbf{return} $\widehat L_{\mathrm{model,cond}}^g+\widehat L_{\mathrm{resid,cond}}^g$\;
\end{algorithm}

\paragraph{Decomposition and reward.}

The $64\times64$ image yields $64$ patches of size $8\times8$, $16$ patches of size $16\times16$, four patches of size $32\times32$, and one complete-image patch. We therefore fit $85$ two-part models and evaluate $21$ four-child joins per image. For a join with an $H\times W$ parent, we divide \cref{eq:cppn_conditional_total} by $HW$, giving conditional complexity per spatial location. Let $r_{16}$, $r_{32}$, and $r_{64}$ be the mean normalized conditional complexities among joins whose parent widths are $16$, $32$, and $64$, respectively. The scalar fitness is the power mean:
\begin{align}
    R(x)=\left(\frac{r_{16}^{p}+r_{32}^{p}+r_{64}^{p}}{3}\right)^{1/p},
    \qquad p=0.75.
    \label{eq:image_reward}
\end{align}
Averaging within a scale prevents the more numerous small patches from dominating, while the power mean gives the three scales equal overall status and is more sensitive than the arithmetic mean to weaker scales.

\paragraph{Evolutionary optimization.}

The coordinate networks above are fitted probabilistic models used only to evaluate the reward. The CPPNs used to generate each candidate image are separate networks whose architectures and parameters are evolved. Each receives the full-image coordinate $(u_x,u_y)\in[-1,1]^2$ and produces three scalar outputs $y_c\in[-1,1]$. We convert each output to an intensity by:
\begin{align}
    x_{ijc}=\operatorname{round}\!\left(\frac{(y_c(u_{ij})+1)(4-1)}{2}\right),
\end{align}
so a single deterministic CPPN genome specifies all $64\times64\times3$ values.

We evolve these genomes with feed-forward NEAT \citep{stanley2002evolving,stanley2004competitive}, implemented using \texttt{neat-python} version $2.1.0$ \citep{neat-python}. NEAT begins with simple networks, introduces nodes and connections through mutation, crosses over related genomes, and groups similar genomes into species so that newly complexified structures can compete primarily with close relatives. We defer the precise crossover, speciation, and offspring-allocation rules to the original papers. Our settings, summarized in \cref{tab:image_neat_hyperparameters}, are close to those used for CPPN evolution in Picbreeder \citep{secretan2008picbreeder,secretan2011picbreeder}.

\begin{table}[t]
    \centering
    \small
    \caption{NEAT hyperparameters for the image experiment. Mutation probabilities are applied by \texttt{neat-python} during genome reproduction.}
    \label{tab:image_neat_hyperparameters}
    \begin{tabular}{p{0.27\linewidth}p{0.65\linewidth}}
    \toprule
    Setting & Value \\
    \midrule
    Population and initialization & $200$ feed-forward genomes; initially no hidden nodes, with both coordinate inputs fully connected to all three outputs \\
    Activations & Hidden-node default tanh; mutation options identity, sigmoid, tanh, and Gaussian; activation mutation probability $0.12$; output activations fixed to tanh \\
    Structural mutation & Add connection $0.35$; delete connection $0.08$; add node $0.25$; delete node $0.05$ \\
    Weights and biases & Gaussian initialization with mean $0$ and standard deviation $1$; weight mutation probability $0.8$; bias mutation probability $0.7$; replacement probability $0.1$; additive-mutation standard deviation $0.75$; values clipped to $[-30,30]$ \\
    Speciation & Compatibility threshold $2.8$; coefficient $1.0$ for nodes or connections present in only one genome; coefficient $0.8$ for weight differences \\
    Stagnation & Species considered stagnant after $30$ generations without improvement; the three highest-fitness species protected from stagnation removal \\
    Reproduction & Two top genomes copied unchanged per species; top $25\%$ of each species eligible to reproduce; minimum species size $2$ \\
    \bottomrule
    \end{tabular}
\end{table}

\paragraph{Image-generating CPPN network statistics.}

To quantitatively analyze the structure of the evolved images, we compute summary statistics of the CPPN networks that generated them, summarized in \cref{tab:cppn_statistics}.

\begin{table}[t]
    \centering
    \small
    \caption{CPPN structural statistics recorded during evolution. Each metric is summarized across the population using its mean and maximum.}
    \label{tab:cppn_statistics}
    \begin{tabular}{p{0.31\linewidth}p{0.61\linewidth}}
    \toprule
    Metric & Description \\
    \midrule
    Nodes
    & Number of hidden nodes. \\
    Connections
    & Number of connections between nodes. \\
    Depth
    & Longest input-to-output path length, measured in connections. \\
    $\log_{10}(1 + \#\text{Paths})$
    & Number of distinct input-to-output paths. \\
    Fan-out (max)
    & Maximum number of connections going out of a hidden node. \\
    Fan-in (max)
    & Maximum number of connections going in to a hidden node. \\
    Reused nodes (\%)
    & Fraction of hidden nodes with fan-out greater than one. \\
    Combining nodes (\%)
    & Fraction of hidden nodes with fan-in greater than one. \\
    \bottomrule
    \end{tabular}
\end{table}

\paragraph{Additional example images.}

\cref{fig:generated_images_hierarchical_all} shows additional generated images using compositionality reward, and \cref{fig:generated_images_sophistication_all} for the structure reward.

\begin{figure}[ht]
    \centering
    \includegraphics[width=\linewidth]{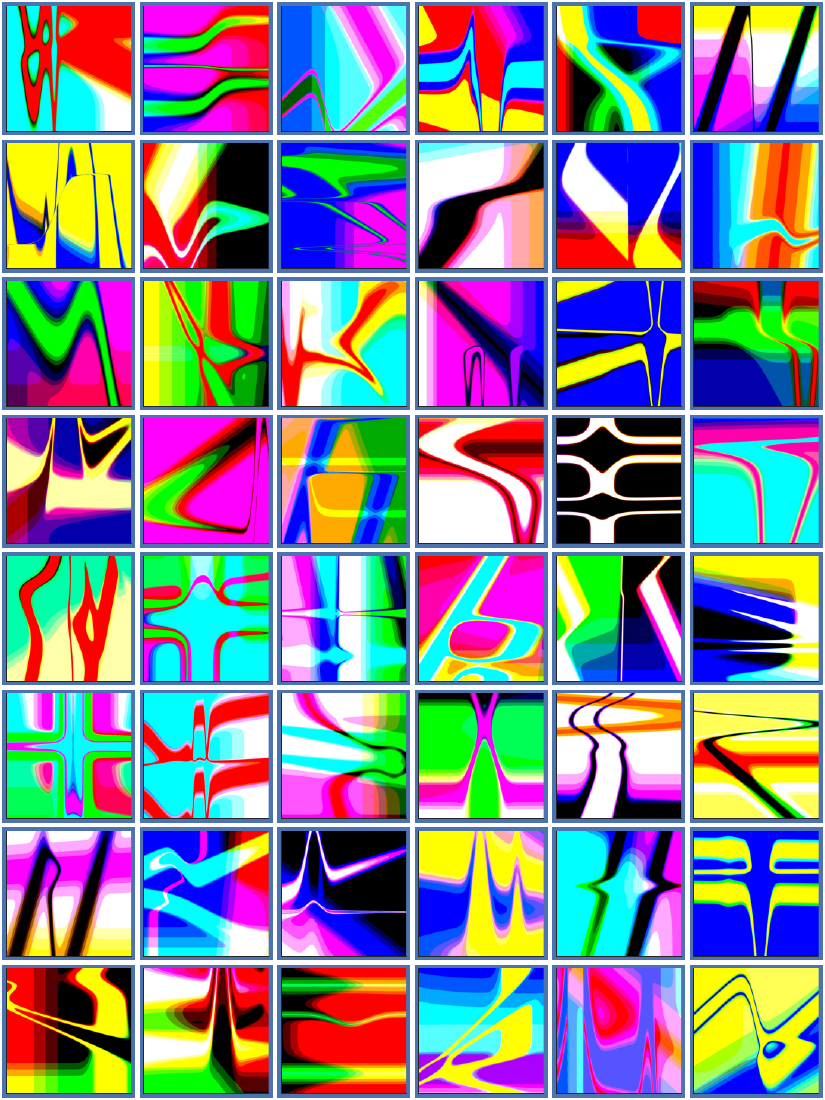}
    \caption{\textbf{Additional images generated from compositionality-optimized CPPNs.}}
    \label{fig:generated_images_hierarchical_all}
\end{figure}
\begin{figure}[ht]
    \centering
    \includegraphics[width=\linewidth]{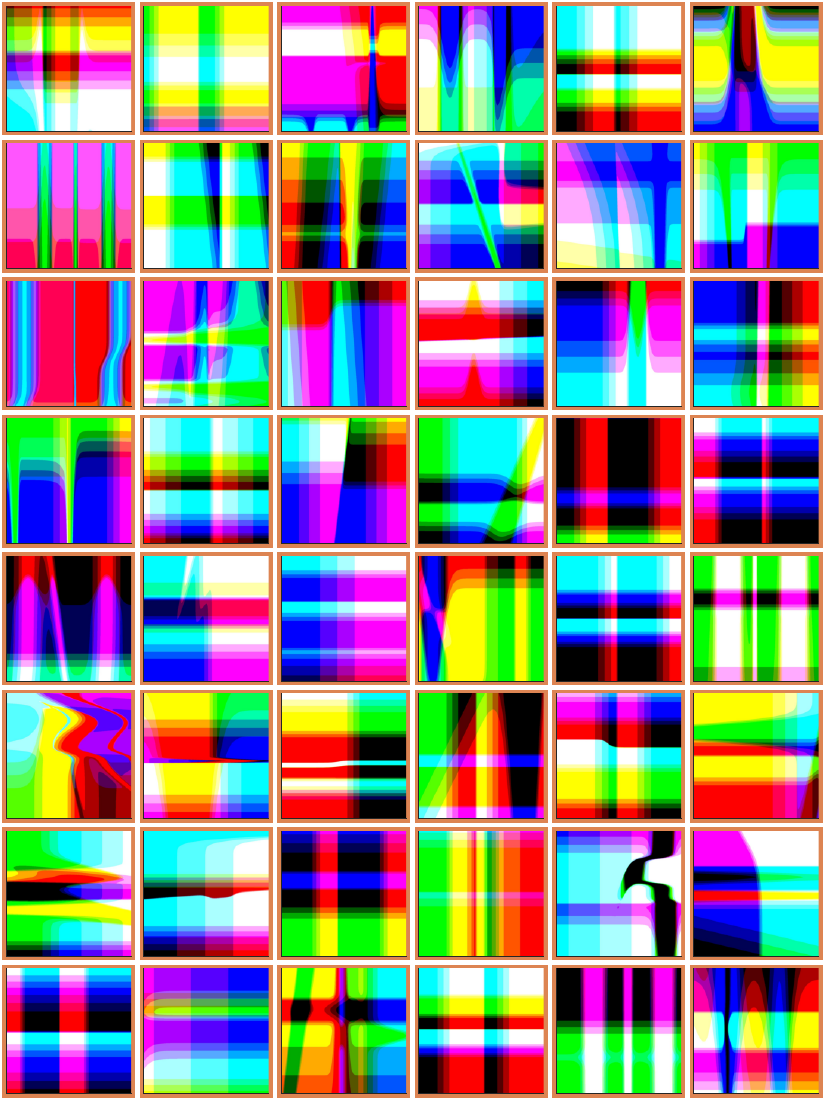}
    \caption{\textbf{Additional images generated from structure-optimized CPPNs.}}
    \label{fig:generated_images_sophistication_all}
\end{figure}

\end{appendices}

\end{document}